\documentclass{article}
\usepackage[main, final]{templates/neurips_2026}

\usepackage[utf8]{inputenc} 
\usepackage[british]{babel}
\usepackage[T1]{fontenc}    
\usepackage{url}            
\usepackage{booktabs}       
\usepackage{multirow}
\usepackage{tabularx}
\usepackage{siunitx}
\usepackage{amsmath,amssymb,amsfonts}
\usepackage{mathtools}
\usepackage{amsthm}
\usepackage{thmtools}
\usepackage{cancel}
\usepackage{graphicx}
\usepackage{yhmath}
\usepackage[most]{tcolorbox}
\usepackage{microtype}      
\usepackage{xcolor}         
\usepackage{colortbl}       
\usepackage[colorlinks=true,linkcolor=blue!60!black,citecolor=blue!60!black,urlcolor=blue!60!black,backref=page]{hyperref} 
\usepackage{subcaption}
\usepackage{wrapfig}
\usepackage{needspace}
\usepackage{placeins}
\usepackage{pifont}
\usepackage{titletoc}
\usepackage{tikz}
\usepackage{cleveref}
\usetikzlibrary{arrows.meta}

\ExplSyntaxOn
\cs_new_protected:Npn \paragraph_with_terminal_punct:n #1
  {
    \tl_set:Nx \l_tmpa_tl { \tl_trim_spaces:n {#1} }
    \regex_match:nnTF { [.!?]\s*\z } { \tl_to_str:N \l_tmpa_tl }
      { \oldparagraph{#1} }
      { \oldparagraph{#1.} }
  }
\cs_new_eq:NN \ParagraphWithTerminalPunct \paragraph_with_terminal_punct:n
\ExplSyntaxOff
\let\oldparagraph\paragraph
\renewcommand{\paragraph}[1]{\ParagraphWithTerminalPunct{#1}}

\usepackage{algorithmicx}
\usepackage{algpseudocode}

\algrenewcommand\algorithmicrequire{\textbf{Require:}}

\definecolor{diffcolor}{RGB}{180,30,30}
\newcommand{\diff}[1]{\textcolor{diffcolor}{#1}}

\newcommand{\E}{\operatorname{\mathbb{E}}}
\newcommand{\dy}{\,\mathrm{d}y}
\newcommand{\dz}{\,\mathrm{d}z}
\newcommand{\dist}{\operatorname{dist}}
\newcommand{\Lrfm}{\operatorname{\mathcal{L}}_\mathrm{RFM}}
\newcommand{\Lrgvfm}{\operatorname{\mathcal{L}}_\text{RG-VFM}}
\newcommand{\vt}{v_t^\theta}
\newcommand{\uc}{u_t^{\text{cond}}}
\newcommand{\sg}{\operatorname{sg}}
\newcommand{\lmmd}{\lambda_{\mathrm{MMD}}}
\newcommand{\Lmmd}{\Lambda_{\mathrm{MMD}}}
\newcommand{\pmstd}[2]{#1{\scriptsize$\pm$}{\scriptsize #2}}
\newcommand{\tblnum}[1]{\num[round-mode=places,round-precision=2]{#1}}
\newcommand{\mstd}[2]{\pmstd{\tblnum{#1}}{\tblnum{#2}}}
\newcommand{\mstdbest}[2]{\pmstd{{\bfseries\tblnum{#1}}}{\tblnum{#2}}}
\newcommand{\tblnumthree}[1]{\num[round-mode=places,round-precision=3]{#1}}
\newcommand{\mstdthree}[2]{\pmstd{\tblnumthree{#1}}{\tblnumthree{#2}}}
\newcommand{\mstdbestthree}[2]{\pmstd{{\bfseries\tblnumthree{#1}}}{\tblnumthree{#2}}}
\newcommand{\tblnumfour}[1]{\num[round-mode=places,round-precision=4]{#1}}
\newcommand{\mstdfour}[2]{\pmstd{\tblnumfour{#1}}{\tblnumthree{#2}}}
\newcommand{\mstdbestfour}[2]{\pmstd{{\bfseries\tblnumfour{#1}}}{\tblnumthree{#2}}}
\newcommand{\mstdunfeasfour}[2]{\cellcolor{gray!20}\pmstd{\tblnumfour{#1}}{\tblnumthree{#2}}}
\newcommand{\yes}{\ding{51}}
\newcommand{\no}{\ding{55}}

\theoremstyle{plain}

\theoremstyle{definition}

\theoremstyle{remark}

\tcbset{
  colback=gray!10,
  colframe=white,
  boxrule=0pt,
  width=\textwidth,
  boxsep=4pt,
  left=0pt,right=0pt,top=4pt,bottom=4pt
}

\title{Compatible Likelihoods for Flow Matching on Manifolds}
\author{%
  Lucas Ng\textsuperscript{\normalfont 1} \\
  \texttt{ln373@cantab.ac.uk} \\
  \And
  Georgios Batzolis\textsuperscript{\normalfont 2} \\
  \And
  Mark Girolami\textsuperscript{\normalfont 1,2} \\
}
\begin{document}
\maketitle
\makeatletter
\begingroup
  \renewcommand{\@makefntext}[1]{\noindent#1}
  \footnotetext{\textbf{Affiliations.}
    \textsuperscript{1}\,University of Cambridge;
    \textsuperscript{2}\,Devotion AI Labs Ltd.}
\endgroup
\makeatother

\begin{abstract}
On the simplest non-Euclidean manifold, the 1D flat torus,
we exhibit a distribution where a recently proposed extension of Riemannian flow matching
is globally optimized by a velocity field that pushes samples away from the data distribution.
We trace this failure mode to a property we call \emph{compatibility};
a likelihood-based flow matching objective is compatible if its population minimizer
recovers the marginal velocity field induced by the data and coupling.
We give a general condition for compatibility,
apply it to existing methods,
and bound the worst-case error
in terms of a residual component vanishing under compatibility.
Next, we note that
existing compatible objectives put density on velocities that are infeasible on compact manifolds.
We propose an objective built from an exponential family over endpoints,
with the conditional velocity as sufficient statistic,
which is both compatible and supported only on feasible velocities.
This manifests as a reweighting of the flow matching loss which,
on constant-curvature manifolds,
treats errors in flow speed and direction differently.
Empirically, our method recovers the correct velocity field on the test case,
outperforms prior methods on high-dimensional product tori,
and obtains substantial NLL improvements over prior work
on two of four standard geospatial benchmarks on the sphere at moderate runtime overhead.
On a large dataset of high-dimensional human poses, our method outperforms prior work according to sample-based metrics.
\end{abstract}

\section{Introduction}\label{sec:introduction}
Much of the success of modern generative modelling can be attributed to the development of transport-based models,
which learn to deform a simple base distribution into the target data distribution.
These include diffusion models \citep{sohl-dickstein2015deep_unsupervised_learning,ho2020ddpm} and normalizing flows
\citep{tabak2013nonparametric_density_estimation,rezende2015normalizing_flows,papamakarios2021normalizing_flows}.
In particular, conditional flow matching models
\citep{lipman2023flow,albergo2023building,liu2023flow}
are effective at generating images \citep{hu2024latent_space_editing},
video \citep{jin2025pyramidal},
and heterogeneous tabular data \citep{guzman-cordero2025exponential}.
These models learn the transport map in terms of a time-dependent vector field
and thus fall within the model class of continuous normalizing flows \citep{chen2018cnf}.
To overcome the scalability challenges faced by other models in this category,
flow matching decomposes the intractable marginal probability path into simpler conditional paths
coupling the base and data distributions.
The idea is to learn the marginal velocity field by regression onto many examples of
the simple fields generating these conditional paths.

Many datasets are inherently non-Euclidean
and more naturally modelled as distributions on curved manifolds of nontrivial topology.
The dominant generalization of flow matching to manifolds
is the Riemannian flow matching (RFM) model of \citet{chen2024rfm}.
Recently, \citet{eijkelboom2024variational} observed that the Euclidean flow matching objective can be seen as maximizing the likelihood
of a fixed-covariance Gaussian model \(q^\theta_t(x_1\mid x_t)\) of the true endpoint posterior \(p_t(x_1\mid x_t)\).
In this framing, flow matching can be generalized to different data supports by appropriate choices of the parametric family \(q^\theta_t\).
Applying this to manifolds, \citet{zaghen2026rgvfm} propose Riemannian Gaussian variational flow matching (RG-VFM),
which maximizes the likelihood of a Riemannian Gaussian over conditional endpoints.

We characterize loss functions for flow matching
in terms of their likelihood model, whether that is explicit (as in RG-VFM) or implicit (as in RFM).
We delineate two properties.
First, we say a loss function is \emph{compatible} if its population optimum gives the true marginal flow induced by the data and coupling (\cref{sec:compatibility-condition}). This is a minimal correctness check for an objective to be generally useful.
Second, we say it is \emph{support-aware} if the corresponding likelihood puts mass only on velocities that are achievable (\cref{sec:rfm-likelihood}).
These are independent properties in the sense that RFM is compatible but not support-aware on compact manifolds, while RG-VFM is support-aware but not compatible
(\cref{tab:method-comparison}).
Our theoretical contributions are:
\begin{itemize}
  \item We derive a general compatibility condition for likelihood-based objectives,
  satisfied by RFM but not RG-VFM\@. We show this analytically on the simplest non-Euclidean manifold, the flat torus \(\mathbb{T}^1\).
  This general condition is most relevant in the manifold setting,
  but applies to any flow matching problem in which the conditional paths are nonlinear.
  \item We show that the squared error in the local flow scales with
  the product of distributional fit and a residual term.
  The residual is zero for a compatible model, eliminating this source of error even when the likelihood model is misspecified. 
  \item We view RFM as maximizing likelihood over velocities in the tangent space under a Gaussian model of fixed covariance.
  This reveals how, despite being compatible, it omits global structure from its training signal.
  One example is compact manifolds, where the maximum flow speed is limited.
  Here, the RFM likelihood puts density on infeasible velocities.
\end{itemize}
This motivates our proposed method, which is both compatible and support-aware.
We use an exponential family in endpoint space
with the conditional velocity as sufficient statistic.
This enforces compatibility regardless of the underlying geometry,
inducing a geometrically preconditioned regression loss with the RFM pipeline otherwise unchanged.
Experimentally, we first evaluate on a simple distribution on \(\mathbb{T}^1\) where the ideal behaviour of all models is analytic.
When a model is incompatible, we see rapid spatiotemporal variation in the learned velocity field while ODE solvers have issues.
We further evaluate on high-dimensional tori
and four real-world geospatial datasets, following \citet{chen2024rfm} and \citet{bortoli2022riemannian}, as well as on a
dataset of human poses in \({\mathrm{SO}(3)}^{19}\) with nearly 12 million training samples.
\begin{table}[]
\centering
\small
\caption{Comparison of our method against prior work.}\label{tab:method-comparison}
\begin{tabularx}{\linewidth}{XXXc}
\toprule
Method &Network output &Compatible? &Support-aware on compact manifolds? \\
\midrule
RFM
& velocity
& \yes
& \no
\\
RG-VFM
& endpoint
& \no
& \yes
\\
Ours
& velocity
& \yes
& \yes
\\
\bottomrule
\end{tabularx}
\end{table}
\section{Preliminaries}\label{sec:cfm-manifolds}
\paragraph{Riemannian Flow Matching (RFM)}
\Citet{chen2024rfm} apply flow matching to general manifolds,
as long as we have a suitable notion of distance to hand.
If we have the exponential and logarithmic maps,
a natural way to construct conditional velocity fields is to follow the minimum-length geodesic at constant speed
from a base sample $x_0\sim p_0$ to a data sample $x_1\sim p_1$ at time $t\in[0,1]$,
giving
$\uc(x_t\mid x_1)=\frac{1}{1-t}\log_{x_t}(x_1)$,
where \(x_t\) is the intermediate point\footnote{
This is computable without clamping the \(1-t\) denominator as it is $\frac{\mathrm{d}}{\mathrm{d}t}x_t$ with $x_t=\exp_{x_0}(t\log_{x_0}(x_1))$.
Considering nonuniqueness of \(\log_{x_t}(x_1)\) is nonessential as the set of coupled pairs \((x_0, x_1)\) with multiple geodesics has measure zero.
}
along the geodesic at time \(t\).
Together with the coupling, these conditional velocity fields \(\uc(x_t\mid x_1)\) uniquely determine conditional probability paths \(p_t(x_t\mid x_1)\)
via the continuity equation,
\[
\partial_t p_t(\cdot\mid x_1)
+\operatorname{div}_g\left(
p_t(\cdot\mid x_1)\uc(\cdot\mid x_1)
\right)=0,
\]
under the boundary conditions $\E_{x_1}[p_t(x_t\mid x_1, t=0)]=p_{0}$
and $\E_{x_1}[p_t(x_t\mid x_1, t=1)]=p_1$.
The converse holds only modulo gauge freedom in the velocity fields (different velocity fields can induce the same probability path).
Learning the marginal velocity field thus gives a transport map from the base distribution into the data distribution.
To achieve this, the RFM loss is
\[
\Lrfm(\theta)=\E_{t,x_0,x_1}\lVert \vt(x_t)-\uc(x_t\mid x_1)\rVert^2_g.
\]

\paragraph{Variational Flow Matching (VFM)}
In the Euclidean setting, \citet{eijkelboom2024variational} reformulate flow matching
as likelihood maximization for a virtual parametric model \(q_t^\theta(x_1\mid x_t)\) of the endpoint posterior \(p_t(x_1\mid x_t)\).
The objective is to minimize the KL divergence
or equivalently maximize a likelihood in endpoint space,
\[\mathcal{L}_{\text{VFM}}(\theta)=\E_{t,x_t}\left[\mathrm{KL}(p_t(x_1\mid x_t)\|q_t^\theta(x_1\mid x_t))\right]
=-\E_{t,x_0,x_1}[\log q_t^\theta(x_1\mid x_t)]+\text{const}.\]
This gives a unified framework with which flow matching can be generalized to arbitrary data supports,
such as the space of graphs, by choosing an appropriate likelihood model \(q_t^\theta\).
\Citet{eijkelboom2024variational} also show that the squared-error loss of Euclidean flow matching corresponds to choosing a Gaussian likelihood of fixed covariance.

\paragraph{Riemannian Gaussian Variational Flow Matching (RG-VFM)}
\Citet{zaghen2026rgvfm} extend this endpoint-likelihood view to manifolds.
They choose the virtual model family to be the Riemannian Gaussian of \citet{pennec2006}
with mean parameter \(\mu_t^\theta\) and constant variance \(\sigma^2=1\),
\[
q_t^\theta(x_1\mid x_t)\propto\exp\left(-\frac{1}{2\sigma^2}\dist_g (x_1, \mu_t^\theta(x_t))^2\right).
\]
By restricting to homogeneous manifolds, where all points are geometrically equivalent under the symmetries of the manifold,
\citet{zaghen2026rgvfm} propose the RG-VFM loss,
\[
\Lrgvfm(\theta)=\E_{t,x_0,x_1}\left[{\dist}_g (x_1, \mu_t^\theta(x_t))^2\right].
\]
This exploits the fact that the partition function of the Riemannian Gaussian on a homogeneous manifold is independent of \(\mu_t^\theta(x_t)\),
hence it can be dropped from the loss without affecting the optimization.
Because \(\mu_t^{\theta}\) is an endpoint, RG-VFM predicts the velocity as
\(\vt(x_t)=\frac{1}{1-t}\log_{x_t}(\mu_t^\theta(x_t))\).

\section{Compatibility of training objectives}\label{sec:compatibility}

We call a training objective \textit{compatible} if every population minimizer,
after applying the prescribed decoder from network output to velocity, yields
\begin{equation}\label{eq:compat-target}
u_t(x_t) = \E_{p_t(x_1\mid x_t)}\left[\uc(x_t\mid x_1)\right].
\end{equation}
This is the minimal requirement for correctness and, if it holds,
integrating the learned field transports $p_0$ to $p_1$
under the standard conditioning requirements of \cref{sec:cfm-manifolds}.
Statistical decision theory would see this as an application of \textit{Fisher consistency};
we use an alternative name here to avoid confusion with Consistency Models \citep{song2023consistency, cheng2025riemannian}.
This holds for RFM as \(\arg\min_v \E\lVert v - U\rVert_g^2 = \E[U]\).
On the other hand, minimizing the RG-VFM objective occurs at Fréchet means of \(p_t(x_1\mid x_t)\),
sometimes known as the Karcher mean set,
which may be nonunique.
In general, constructing a tangent velocity from the optimal endpoint prediction
coincides with \cref{eq:compat-target} only in Euclidean space.

Given the momentum achieved by the likelihood-based framing of \citet{eijkelboom2024variational},
which is increasingly adopted in diverse settings \citep[such as][]{guzman-cordero2025exponential,matisan2026purrception},
we can imagine future work proposing new likelihood-based flow matching objectives
that appear natural but have not been checked against this condition.
This may be because it is analytically non-obvious from the choice of parametrization or
because the additional level of indirection given by the likelihood view
obscures the connection to the careful construction of conditional flow matching.

In this section,
we seek a general framework for compatibility of flow matching losses,
both those that exist and those that \textit{will} exist.
Under the likelihood perspective,
we study the behaviour of these losses in the manifold setting.
We are primarily interested in the best-fit performance of such a likelihood model
and proceed on the assumption of infinite data and network capacity.
Hence we reason beyond a binary yes-no condition
to quantify how far from the true marginal flow
a likelihood model predicts in the worst case, even at its best fit.

\subsection{A motivating example: incompatibility on a manifold without curvature}\label{sec:motivating-example}

\newcommand{\TorusPath}[5]{%
  \pgfmathsetmacro{\xA}{#1}
  \pgfmathsetmacro{\yA}{#2}
  \pgfmathsetmacro{\xB}{#3}
  \pgfmathsetmacro{\yB}{#4}
  \pgfmathsetmacro{\dy}{\yB-\yA}
  \pgfmathtruncatemacro{\needswrap}{abs(\dy) > pi ? 1 : 0}
  \ifnum\needswrap=0
    \draw[#5] (\xA,\yA) -- (\xB,\yB);
  \else
    \pgfmathtruncatemacro{\wrapdown}{\dy > 0 ? 1 : 0}
    \ifnum\wrapdown=1
      \pgfmathsetmacro{\yBt}{\yB - 2*pi}
      \pgfmathsetmacro{\xC}{\xA + (\xB-\xA)*((-pi)-\yA)/(\yBt-\yA)}
      \draw[#5] (\xA,\yA) -- (\xC,-pi);
      \draw[#5] (\xC, pi) -- (\xB,\yB);
    \else
      \pgfmathsetmacro{\yBt}{\yB + 2*pi}
      \pgfmathsetmacro{\xC}{\xA + (\xB-\xA)*(( pi)-\yA)/(\yBt-\yA)}
      \draw[#5] (\xA,\yA) -- (\xC, pi);
      \draw[#5] (\xC,-pi) -- (\xB,\yB);
    \fi
  \fi
}

\begin{figure}[tbp]
\begin{tcolorbox}
\centering
\begin{tikzpicture}[x=5.9cm,y=0.98cm,>=Latex]
  \pgfdeclarelayer{bg}
  \pgfdeclarelayer{fg}
  \pgfsetlayers{bg,main,fg}

  \def\tauVal{0.25}
  \def\alphaCoord{0.62831853072}   
  \def\betaCoord{1.88495559215}    
  \def\Top{3.14159265359}
  \def\Bottom{-3.14159265359}

  \def\deltatRG{0.16}   
  \pgfmathsetmacro{\xendRG}{\tauVal+\deltatRG}
  \pgfmathsetmacro{\vrg}{pi/(1-\tauVal)}
  \pgfmathsetmacro{\yup}{\deltatRG*\vrg}
  \pgfmathsetmacro{\ydown}{-\deltatRG*\vrg}

  \draw[line width=1.2pt] (0,\Bottom) -- (0,\Top);
  \draw[line width=1.2pt] (1,\Bottom) -- (1,\Top);
  \draw[dashed, line width=0.8pt] (\tauVal,\Bottom) -- (\tauVal,\Top);

  \draw[-{Latex[length=2.1mm]}, thin] (0,\Bottom-0.58) -- (1.0,\Bottom-0.58)
    node[midway, below=5pt] {$t\in[0,1]$};

  \node[above] at (\tauVal,\Top) {$t=\tau=\frac14$};

  \node[left]  at (0,\Top) {$\;\pi$};
  \node[left]  at (0,\Bottom) {$\;-\pi$};
  \node[right] at (1,\Top) {$\;\pi$};
  \node[right] at (1,\Bottom) {$\;-\pi$};
  \node[left]  at (0,0) {$x_0$};
  \node[right] at (1,0) {$x_1$};

  \begin{pgfonlayer}{bg}
    \TorusPath{0}{ \alphaCoord}{1}{ \betaCoord}
      {densely dotted, line width=1.0pt, draw=gray!55}
    \TorusPath{0}{-\alphaCoord}{1}{-\betaCoord}
      {densely dotted, line width=1.0pt, draw=gray!55}
    \TorusPath{0}{ \alphaCoord}{1}{-\betaCoord}
      {densely dotted, line width=1.25pt, draw=gray!82}
    \TorusPath{0}{-\alphaCoord}{1}{ \betaCoord}
      {densely dotted, line width=1.25pt, draw=gray!82}
  \end{pgfonlayer}


  \draw[orange!90!black, -{Latex[length=2.4mm]}, line width=1.35pt]
    (\tauVal,0) -- (\xendRG,\yup);
  \draw[orange!90!black, -{Latex[length=2.4mm]}, line width=1.35pt]
    (\tauVal,0) -- (\xendRG,\ydown);

  \node[orange!90!black, above=-2pt] (rg-vfm-label) at (\xendRG,\yup)
    {$v_{\text{RG-VFM}}$};
  \node[anchor=west, blue!70!black] at (rg-vfm-label.west |- 0,0)
    {$u_\tau(0)=0$};

  \begin{pgfonlayer}{fg}
    \draw[
      draw=orange!90!black,
      fill=orange!75,
      fill opacity=0.28,
      densely dotted,
      line width=1.1pt
    ] (1,\Top) circle[radius=3.6pt];

    \draw[
      draw=orange!90!black,
      fill=orange!75,
      fill opacity=0.28,
      densely dotted,
      line width=1.1pt
    ] (1,\Bottom) circle[radius=3.6pt];
  \end{pgfonlayer}

  \begin{pgfonlayer}{fg}
    \fill (\tauVal,0) circle[radius=1.8pt];
  \end{pgfonlayer}

  \begin{pgfonlayer}{fg}
    \filldraw[draw=blue!70!black, fill=blue!18, line width=1.05pt]
      (0,\alphaCoord) circle[radius=3.4pt];
    \filldraw[draw=blue!70!black, fill=blue!18, line width=1.05pt]
      (0,-\alphaCoord) circle[radius=3.4pt];

    \filldraw[draw=blue!70!black, fill=blue!18, line width=1.05pt]
      (1,\betaCoord) circle[radius=3.4pt];
    \filldraw[draw=blue!70!black, fill=blue!18, line width=1.05pt]
      (1,-\betaCoord) circle[radius=3.4pt];

    \node[left=5pt, blue!70!black] at (0,\alphaCoord) {$+\alpha$};
    \node[left=5pt, blue!70!black] at (0,-\alphaCoord) {$-\alpha$};
    \node[right=5pt, blue!70!black] at (1,\betaCoord) {$+\beta$};
    \node[right=5pt, blue!70!black] at (1,-\betaCoord) {$-\beta$};
  \end{pgfonlayer}
\end{tikzpicture}
\end{tcolorbox}
\caption{
Example on $\mathbb{T}^1$ with $\tau=\frac{1}{4}$, $\alpha=\frac{\pi}{5}$,
$\beta=\frac{3\pi}{5}$. Dotted lines show conditional paths; their tangents are conditional velocities.
Two cross-pairs meet at $(x,t)=(0,\tau)$, where the true marginal velocity is zero.
We show both possible branches of the optimal RG-VFM prediction in orange.
}\label{fig:t1-example-rgvfm}
\end{figure}
\citet[\S4.2]{zaghen2026rgvfm} write that RG-VFM implicitly captures the full geometric structure of the manifold
while RFM lacks curvature information, resulting in weaker, less precise supervision.
Therefore, we begin by comparing both methods on the flat torus \(\mathbb{T}^1\cong\mathbb{R}/2\pi\mathbb{Z}\)
where the geometric structure is both conceptually simple and without curvature.
Consider the base and data distributions
\begin{equation}
\begin{aligned}
  p_0 &= \tfrac{1}{2}\delta_{-\alpha} + \tfrac{1}{2}\delta_{+\alpha}, \\
  p_1 &= \tfrac{1}{2}\delta_{-\beta} + \tfrac{1}{2}\delta_{+\beta}
\end{aligned}
\end{equation}
under independent coupling \(p_0\otimes p_1\).
With some \(\tau\in(0,\frac12)\), let 
\(\beta \in (\frac{\pi}{2}, \pi(1-\tau))\) and \(\alpha = \tau\beta/(1-\tau)\).
As illustrated in \cref{fig:t1-example-rgvfm},
the two cross-pair geodesics then meet at \(x=0\) at time \(t=\tau\),
carrying equal and opposite velocities.
At \((x,t)=(0,\tau)\), their conditional expectation---the true marginal velocity---is zero.
At the same \((x,t)=(0,\tau)\), RG-VFM minimizes the Fr\'echet function,
\begin{equation}
  \mathcal{L}[\mu^\theta_\tau(0)]=\frac12\dist_g (-\beta, \mu^\theta_\tau(0))^2+\frac12\dist_g (\beta, \mu^\theta_\tau(0))^2.
\end{equation}
This has local minima at \(\mu^\theta_\tau(0)=0\) and \(\mu^\theta_\tau(0)=\pi\); the loss is locally quadratic around each.
Because \(\beta>\pi/2\), the shorter arc connecting \(x_1=-\beta\) to \(x_1=\beta\) is the one through \(\pi\),
hence \(\mu^\theta_\tau(0)=\pi\) is the unique, globally optimal endpoint prediction under the RG-VFM loss.
Decoding this endpoint prediction into a velocity prediction at \((0,\tau)\) gives a velocity of magnitude \(\pi/(1-\tau)\),
substantially different to the zero velocity we expect.
Hence, despite the endpoint prediction being globally optimal under the loss,
we have incompatible flow on a flat manifold.

Although this construction is intuitive and analytically tractable,
a natural question is whether this behaviour is a consequence of the singular distributions
and not the objective itself; we analyse a smoothed version in \cref{sec:example-smoothed}.
Next, we formalize compatibility for all flow matching objectives viewed as likelihoods,
agnostic of the base, coupling, or data.

\subsection{The condition for compatibility of likelihood-based objectives}\label{sec:compatibility-condition}

Think of training as fitting a model
distribution $q^\theta(z)=q^\theta_t (z\mid x)$ at each $(x, t)$ to a data posterior $p(z)=p_t(z\mid x)$, where $z$ is whichever
variable carries the likelihood, for example an endpoint $x_1$, a conditional velocity
$\uc$, or anything else.\footnote{Any likelihood over endpoints can be reparameterized
as a likelihood over conditional velocities via the map \(x_1 \mapsto \uc(x_t\mid x_1)\),
but not vice versa; we discuss the implications of this further in \cref{sec:rfm-likelihood}.
}
The marginal velocity field is the
expectation under $p$ of a feature $\psi_{x,t}(z)$ that maps $z$ to conditional-velocity space,
\(u_t(x) = \E_p[\psi_{x,t}(z)]\).
For an endpoint-space model (like RG-VFM),
$\psi_{x,t}(z) = \uc(x\mid z)$,
while for a velocity-space model (like RFM),
$\psi_{x,t}(z) = z$.
The question is whether likelihood optimization can distinguish between data posteriors
that differ in $u_t(x)$.

Suppose $\theta=\theta^{*}$ is optimal for $p$
under the loss \(\mathcal{L}_{p}(\theta)=\E_p[-\log q^{\theta}(z)]\).
We wish to know if this optimum could give an incorrect velocity prediction.
Consider an alternative $p'=p + h\ge 0$ for some (signed, square-integrable, mean-zero) $h$. Then
\begin{equation}\label{eq:perturbed-grad}
\nabla_\theta\mathcal{L}_{p+h}(\theta^*)=\nabla_\theta\mathcal{L}_p(\theta^*)+\left[\nabla_{\theta}\int-\log q^{\theta}(z)h(z)\dz\right]_{\theta=\theta^*}=-\int s^*(z)h(z)\dz,
\end{equation}
where $s^*(z) = \nabla_\theta \log q^\theta(z)|_{\theta=\theta^*}$ is the model score at
optimum. This perturbation $h$ is first-order invisible to the likelihood if this integral
vanishes. Meanwhile, it shifts the marginal velocity by
\begin{equation}\label{eq:field-shift-perturb}
\E_{p+h}[\psi_{x,t}(z)] - \E_p[\psi_{x,t}(z)] = \int \psi_{x,t}(z)\,h(z)\,\mathrm{d}z.
\end{equation}
For a correct prediction, we need both \cref{eq:perturbed-grad} and \cref{eq:field-shift-perturb}
to agree that a perturbation actually occurs,
that is, the following logical implication must hold for all $h$:
\begin{equation}
  \int s^*(z)\,h(z)\,\mathrm{d}z = 0
  \quad\Longrightarrow\quad
  \int \psi_{x,t}(z)\,h(z)\,\mathrm{d}z = 0.
\end{equation}
Equivalently,
this is a check for the existence of a $z$-independent linear map $A_{x,t}^\theta\colon\mathbb{R}^{m}\to T_x\mathcal{M}$
and vector $b_{x,t}^\theta\in T_x\mathcal{M}$ such that we have (almost everywhere),
\begin{tcolorbox}
  \begin{equation}\label{eq:affine-criterion}
    \psi_{x,t}(z)=A_{x,t}^\theta s^\theta(z)+b_{x,t}^\theta,\qquad\theta=\theta^*.
  \end{equation}
\end{tcolorbox}
Since $\theta^*$ is unknown before training, a practical objective must satisfy
\cref{eq:affine-criterion} at every $\theta$ in the model class, not just at the
optimum.
Applied to RG-VFM, whose network output is $\mu=\mu_t^\theta(x_t)$,
the target feature is \(\psi_{x,t}(z)=\frac{1}{1-t}\log_x(z)\), and the score with respect to the network output is
\(s(z)=\frac{1}{\sigma^2}\log_{\mu}(z)\).
On a non-Euclidean manifold there is no global affine relationship between
$\log_{\mu}(z)$ and $\log_x(z)$, so compatibility does not hold,
obstructed by both curvature and topology.

\subsection{Measuring the flow error}\label{sec:flow-error}

A natural way to quantify the departure from compatibility is via the best approximation at the likelihood optimum,
so as to satisfy \cref{eq:affine-criterion} as much as possible.
Specifically, we solve the least-squares problem
\begin{equation}\label{eq:affine-ls-fit}
(A_{x,t}^*,\,b_{x,t}^*) \in \arg\min_{A,b}\E_{q^{\theta^*}}\lVert\psi_{x,t}(z) - A s^*(z) - b\rVert_g^2,
\end{equation}
denoting the residual term as \(r_{x,t}(z) = \psi_{x,t}(z) - A_{x,t}^* s^*(z) - b_{x,t}^*\).
We obtain the following bound, which is locally sharp in the sense of Cauchy--Schwarz:\footnote{
  The bound is locally sharp with $\chi^2$.
  All $f$-divergences with $f''(1)>0$ are locally equivalent so KL has the same rate.}
\begin{restatable}{theorem}{affineresidualbound}\label{thm:affine-residual-bound}
Assume measurable $\psi_{x,t}\in L^2(q^{\theta^*};T_x\mathcal M)$, $s^*\in L^2(q^{\theta^*};\mathbb R^m)$.
Let $p$ be absolutely continuous with respect to $q^{\theta^*}$
under the first-order optimality condition $\E_p[s^*(z)]=0$
(requiring \(\theta^*\) to be interior)
and enough regularity for the score identity $\E_{q^{\theta^*}}[s^*(z)]=0$.
Then
\begin{tcolorbox}
\begin{equation}\label{eq:affine-residual-bound}
\left\|\E_p[\psi_{x,t}(z)]-\E_{q^{\theta^*}}[\psi_{x,t}(z)]\right\|_g
\le\sqrt{\chi^2(p\|q^{\theta^*})}%
\sqrt{\left\|\E_{q^{\theta^*}}[r_{x,t}\otimes r_{x,t}]\right\|_{\mathrm{op},g}},
\end{equation}
\end{tcolorbox}
where \(\chi^2(p\|q^{\theta^*})<\infty\) is the chi-squared divergence between \(p\) and \(q^{\theta^*}\).
\end{restatable}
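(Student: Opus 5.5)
We work with the likelihood ratio $\rho = \mathrm{d}p/\mathrm{d}q^{\theta^*}$, which exists because $p \ll q^{\theta^*}$. Write $\delta = \E_p[\psi_{x,t}] - \E_{q^{\theta^*}}[\psi_{x,t}] = \E_{q^{\theta^*}}[(\rho-1)\psi_{x,t}]$. Since $\chi^2(p\|q^{\theta^*}) = \E_{q^{\theta^*}}[(\rho-1)^2] < \infty$, the function $\rho - 1$ lies in $L^2(q^{\theta^*})$. It also has mean zero under $q^{\theta^*}$, so it plays the role of the perturbation $h$ in \cref{eq:perturbed-grad}.

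The plan is to replace $\psi_{x,t}$ by its residual. Substitute the decomposition $\psi_{x,t} = A^*_{x,t} s^* + b^*_{x,t} + r_{x,t}$ from \cref{eq:affine-ls-fit} into the expression for $\delta$ and treat the three terms separately.
\begin{itemize}
\item The constant term vanishes, because $\E_{q^{\theta^*}}[\rho-1]=0$.
\item The score term vanishes, because $\E_{q^{\theta^*}}[(\rho-1)s^*] = \E_p[s^*] - \E_{q^{\theta^*}}[s^*] = 0 - 0$. Here we use the first-order optimality condition and the score identity, both of which are assumed in the statement. Integrability of $(\rho-1)s^*$ follows from Cauchy--Schwarz, since $\rho-1$ and $s^*$ are both in $L^2(q^{\theta^*})$.
\end{itemize}
Hence $\delta = \E_{q^{\theta^*}}[(\rho-1)\,r_{x,t}]$. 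We need $r_{x,t} \in L^2(q^{\theta^*};T_x\mathcal M)$ so that this is well defined. This holds because $\psi_{x,t}$ and $s^*$ are in $L^2$ and the least-squares minimiser exists: it is the orthogonal projection onto the finite-dimensional span of the components of $s^*$ and the constants. If the covariance of $s^*$ is degenerate, any minimiser still gives the same projection, and hence the same residual.

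The bound is then a dual-norm argument. For any unit vector $e \in T_x\mathcal M$ (in the metric $g$), Cauchy--Schwarz in $L^2(q^{\theta^*})$ gives
\[
\langle e,\delta\rangle_g = \E_{q^{\theta^*}}\big[(\rho-1)\langle e, r_{x,t}\rangle_g\big] \le \sqrt{\chi^2(p\|q^{\theta^*})}\,\sqrt{\E_{q^{\theta^*}}\langle e,r_{x,t}\rangle_g^2}.
\]
The second factor equals $\sqrt{e^\top \E_{q^{\theta^*}}[r_{x,t}\otimes r_{x,t}]\, e}$, which is at most the square root of the operator norm of the second-moment tensor with respect to $g$. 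Taking the supremum over unit $e$ turns the left-hand side into $\|\delta\|_g$, which gives \cref{eq:affine-residual-bound}.

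We expect no deep obstacle. The step needing most care is the bookkeeping on integrability and well-definedness: showing that $\rho-1$ is in $L^2$, that the products $(\rho-1)s^*$ and $(\rho-1)\psi_{x,t}$ are integrable so the linear splitting is legitimate, and that the residual does not depend on which minimiser of \cref{eq:affine-ls-fit} is chosen. It is also worth noting that projecting onto constants is what allows $\rho-1$ to be used instead of $\rho$. Local sharpness is then immediate: equality in Cauchy--Schwarz is attained by taking $\rho - 1 \propto \langle e, r_{x,t}\rangle_g$ for a top eigenvector $e$, scaled small enough that $\rho \ge 0$ (assuming boundedness, or after truncation). This choice of $\rho - 1$ is orthogonal to $s^*$ and to the constants, so it is consistent with the optimality condition.
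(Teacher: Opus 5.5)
Your proof is correct and follows essentially the same route as the paper: you substitute $\psi_{x,t}=A^*_{x,t}s^*+b^*_{x,t}+r_{x,t}$, use $\E_p[s^*]=\E_{q^{\theta^*}}[s^*]=0$ to kill the score and constant terms, rewrite the difference as $\E_{q^{\theta^*}}[(\mathrm{d}p/\mathrm{d}q^{\theta^*}-1)\,r_{x,t}]$, and apply Cauchy--Schwarz along each unit direction before taking the supremum. Your added integrability bookkeeping fills in details the paper leaves implicit, as does your remark that the residual does not depend on which least-squares minimiser is chosen.
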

\begin{proof}
  See \cref{sec:proof-affine-residual-bound}.
\end{proof}
This elucidates why strong performance does not need the likelihood to be an accurate approximation to the density.
A compatible likelihood model has zero residual \(r_{x,t}\) and zero flow error at the population optimum,
regardless of the residual distributional misfit \(\chi^2(p\|q^{\theta^*})\).
This gives a proof of the compatibility condition alternative to the exposition in \cref{sec:compatibility-condition}.
Although adding more degrees of freedom to the likelihood model---aiming for better density estimation---seems a natural way to reduce flow error,
if the model is incompatible, the residual \(r_{x,t}\) remains nonzero.
Unconstrained by first-order conditions, a flow error generically persists.
Consequently, we should favour likelihood families that possess compatibility over those that are more expressive but do not.

Finally, the velocity that the fitted likelihood model implies, \(\E_{q^{\theta^*}}[\psi_{x,t}(z)]\),
is not necessarily the prediction the model uses in practice, \(\vt(x_t)\).
To avoid computing the expectation online, methods like RG-VFM use a decoder which takes the fitted parameters of the likelihood at the network head and maps them to a velocity prediction.
Via the triangle inequality, this mismatch between the decoder and the likelihood moments results in an additional error term:
\begin{align}
  \lVert v^{\theta}_{t}(x_{t})-u_{t}(x_{t})\rVert_{g}&=\lVert v^{\theta}_{t}(x_{t})-\E_{q^{\theta^{\ast}}}[\psi_{x,t}]+\E_{q^{\theta^{\ast}}}[\psi_{x,t}]-u_{t}(x_{t})\rVert_{g} \\
  &\le
\underbrace{\lVert \vt-\E_{q^{\theta^*}}[\psi_{x,t}]\rVert_g}_{\text{likelihood-decoder mismatch}}
+\underbrace{\sqrt{\chi^2(p\|q^{\theta^*})}%
\sqrt{\left\|\E_{q^{\theta^*}}[r_{x,t}\otimes r_{x,t}]\right\|_{\mathrm{op},g}}}_{\text{likelihood misfit}\;\times\;\text{incompatibility bound}}.
\end{align}
This reiterates that, in settings with nonlinear conditional paths
as on general manifolds,
a likelihood family needs more than geometric naturalness.
In addition to compatibility, we desire a parameterization of the likelihood family without this decoder error.

\subsection{Riemannian flow matching does likelihood maximization implicitly}\label{sec:rfm-likelihood}

Here, we express the RFM loss in maximum-likelihood form (in velocity space),
\begin{equation}\label{eq:rfm-mle}
\mathcal{L}_{\mathrm{RFM}}(\theta)
= \E_{t,x_0,x_1}\left[-\log q_t^\theta(\uc\mid x_t)\right] + \mathrm{const},
\end{equation}
where $q_t^\theta(\uc\mid x_t)$ is the Gaussian
$\mathcal{N}(\uc;\vt(x_t),\frac12 g^{-1})$ over the full tangent
space. The score with respect to the network output is $s(u)=2(u-\vt)$,
affine in the feature $\psi_{x,t}(u)=u$, so compatibility holds.
However, the support of the velocity-space Gaussian likelihood is not necessarily restricted to the feasible set.
On a compact manifold \(\mathcal{M}\), conditional velocities are bounded as
\begin{equation}
\lVert\uc(x_t\mid x_1)\rVert_g \leq \frac{\mathrm{diameter}(\mathcal{M})}{1-t}.
\end{equation}
The map
$x_1 \mapsto \uc(x_t\mid x_1)$ is locally injective and not
surjective.
That is, many tangent vectors do not correspond to any endpoint on the manifold.
Despite this, the tangent-space Gaussian puts density on velocities outside this regardless of the training distributions,
manifesting as near-equal loss for a neighbouring pair of feasible and infeasible velocities straddling the boundary of the feasible set.
This discards prior knowledge of the global manifold structure
and suggests an improvement to the method: using this information
to improve training efficiency and achieved optima.

\section{Method}\label{sec:method}

These theoretical developments
point to a likelihood family whose score is affine in the conditional velocity.
We also desire for the likelihood to only have support on manifold endpoints,
so as to avoid inefficient density assignment.
As in \citet{zaghen2026rgvfm}, defining the likelihood in endpoint space
(at least initially) is a natural way to achieve this.
Keeping velocity prediction $\vt(x)\colon\mathcal M \to T_{x}\mathcal M$ as the network output,
we propose an exponential family with conditional velocity as the sufficient statistic,
\begin{equation}\label{eq:endpoint-family}
q_t^{\theta}(x_1\mid x_t)
\propto c_{x,t}(x_1)\exp\!\left(\langle\eta,\uc(x_t\mid x_1)\rangle_g\right).
\end{equation}
This is an exponential family with natural parameter $\eta\in T_{x_t}\mathcal{M}$,
carrier $c_{x,t}(x_1)\colon\mathcal M \to \mathbb{R}^+$.
This uniquely maximizes entropy relative to $c_{x,t}\,\mathrm{dVol}_g(x_1)$
amongst endpoint distributions of a given mean velocity.
Per the standard property of exponential families,
let the mean map be
\begin{equation}
m_{x,t}(\eta)=\nabla_\eta A_{x,t}(\eta)=\E_{q_t^\theta}[\uc(x_t\mid x_1)],
\end{equation}
where $A_{x,t}(\eta)$ is the log-partition function,
\begin{equation}\label{eq:log-partition}
A_{x,t}(\eta)
=\log\int_{\mathcal M}c_{x,t}(x_1)\exp(\langle\eta,\uc(x_t\mid x_1)\rangle_g)\,\mathrm{dVol}_g(x_1).
\end{equation}
To tie the family to the network, we set $\eta=m_{x,t}^{-1}(\vt(x_t))$.
This makes the prediction of the model match the likelihood expectation.
We discuss the domain of this in \cref{sec:method-details}.
The score with respect to $\eta$ is
$\uc(x_t\mid x_1)-\vt(x_t)$ and since the target feature is $\psi_{x,t}(x_1)=\uc(x_t\mid x_1)$,
the compatibility condition of \cref{eq:affine-criterion} is satisfied.
Hence, the model fulfils both key desiderata.

\paragraph{Training loss}
Differentiating the negative log-likelihood $\ell_{\mathrm{NLL}}(\theta)$ with respect to $\theta$ gives
\begin{equation}\label{eq:method-grad}
\nabla_\theta\ell_{\mathrm{NLL}}(\theta)
=(\nabla_\theta\vt(x_t))^\top V_{x,t}(\vt(x_t))^{-1}(\vt(x_t)-\uc(x_t\mid x_1)),
\end{equation}
where $V_{x,t}(v)=\nabla_\eta^2 A_{x,t}(\eta)\big|_{\eta=m_{x,t}^{-1}(v)}$ is the covariance
of \(\uc\)
at the natural parameter \(\eta\) matching the predicted velocity $v$.
This is the \textit{Fisher information} of the model.
Detaching $V_{x,t}$ from the backward pass
yields a surrogate loss with gradients identical to those of the likelihood objective,\footnote{\(V_{x,t}\) is self-adjoint under \(g\).}
\begin{equation}\label{eq:method-surrogate}
\mathcal{L}(\theta)
=\frac{1}{2}\E_{t,x_0,x_1}\left[
  \langle\vt(x_t)-\uc,\;\sg(V_{x,t}(\vt(x_t)))^{-1}(\vt(x_t)-\uc)\rangle_g
  \right],
\end{equation}
where \(\sg(\cdot)\) is the stop-gradient operator.
Our method naturally reduces to a reweighting of the RFM objective,
and crucially without opening up the entries of the preconditioning matrix as a new design space.
In other settings,
using the inverse Fisher information to adapt dynamics to the intrinsic geometry
is an idea that has seen performance gains \citep{girolami2011rhmc}.

\paragraph{Choice of carrier}
Although our construction addresses both key desiderata, the choice of carrier \(c_{x,t}\) is apparently open.
However, this design axis mostly collapses under several structural constraints.
We seek a centred, isotropic carrier that factorizes on product manifolds, localizes around $x_t$
as $t\to1$, and recovers ordinary flow matching in Euclidean space.
Among radial carriers, product factorization
imposes a Gaussian form; terminal locality sets the decay;
and matching the Euclidean covariance sets its scale
(\cref{sec:carrier-choice}).
This leads to the following choice, reminiscent of the Riemannian Gaussian in \citet{zaghen2026rgvfm}:
\begin{equation}
c_{x,t}(x_1)
=\exp\left(-\frac12\frac{\dist_g(x_t,x_1)^2}{(1-t)^2}\right)
=\exp\left(-\frac12\lVert\uc(x_t\mid x_1)\rVert_g^2\right).
\end{equation}
This is also normalizable on common noncompact
manifolds, including hyperbolic space, which requires the carrier to have superlinear decay.
As $t\to1$, the loss tends to that of RFM\@.
In \cref{sec:supplementary-experiments}, we study sensitivity to the carrier.
The resulting velocity-space likelihood is Gaussian in Euclidean space
and a truncated Gaussian on flat tori;
on spheres, the loss upweights errors in speed relative to those in direction (\cref{fig:s2-vinv-decomposition}).
This reflects the intuition that geodesic convergence on manifolds of positive curvature
means errors in magnitude are more significant than those in direction.

\paragraph{Practical computation}
At first, it seems we must compute the mean map \(m_{x,t}\) and covariance \(V_{x,t}\) at each training step, online and
via numerical differentiation.
The insight is that the more symmetries a manifold has,
the fewer scalar functions $V_{x,t}$ depends on.
We obtain a formula for each scalar function via
analytical differentiation under the integral sign.
In fact, on many manifolds, we pretabulate these quantities with
quadrature in low dimensions, often just one.
Different datasets on a given manifold can reuse the same lookup table.
On the flat torus $\mathbb T^d$, the covariance $V_{x,t}$ is diagonal with each
entry an independent function of $\eta_i$.
All factors then share a single 1D lookup table.
On spaces of constant curvature,
isotropy cuts $V_{x,t}$ down to two scalar functions of the magnitude.
On product manifolds, $V_{x,t}$ is block diagonal with one block per factor.
This lets us compose tables and highlights that on many manifolds,
the table dimension stays low as the manifold dimension grows.
In training, we query the lookup tables in \(\mathcal{O}(1)\) time,
with minimal overhead (\cref{sec:perf}).

\section{Experiments}\label{sec:experiments}
In this section, we evaluate our theoretical predictions and method across four settings.
We defer implementation details to \cref{sec:method-details} and evaluation and training protocols to \cref{sec:experiment-protocols}.

\begin{wrapfigure}[]{r}{0.45\textwidth} 
  \centering
  \includegraphics[width=\linewidth]{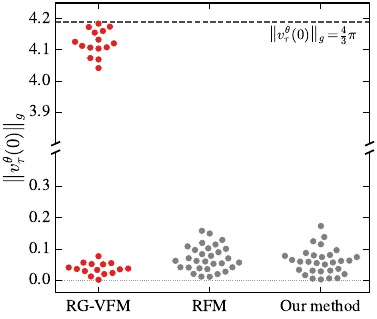}
  \caption{Velocity magnitude \(\|v^\theta_{t=\frac14}(0)\|_g\) swarmplot, 30 seeds.
  The upper RG-VFM cluster has better training loss but poorer generative performance; see \cref{tab:t1-benchmark-distmetrics}.
  }
  \label{fig:t1-probe-bimodality}
\end{wrapfigure}

\paragraph{Analytic test case}\label{para:exp-t1-benchmark}
We test the predictions of \cref{sec:motivating-example} under the parameters
illustrated in \cref{fig:t1-example-rgvfm} with 30 seeds.
\Cref{fig:t1-probe-bimodality} shows the velocity error at \((x,t)=(0,\frac14)\).
Both RFM and our method achieve velocities close to the ideal value of zero, while
RG-VFM is bimodal and splits into two clusters of 15 seeds each.
As the magnitude is bounded per \cref{sec:rfm-likelihood},
instead of being centred at the value \(\frac{4\pi}{3}\) of \cref{sec:motivating-example},
the upper cluster is just below.
The lower cluster corresponds to the population local minimum of RG-VFM at zero velocity,
which is a stable attractor for some seeds but not globally optimal.

\Cref{tab:t1-benchmark-distmetrics} shows distributional metrics and dopri5 NFE and wall times.
Ranking by \(W_1\) agrees with ranking by velocity error at \((0,\frac14)\),
supporting the idea that learning the marginal velocity is important for generative performance.
RG-VFM seeds in the upper cluster are better by training loss than those in the lower cluster,
but perform worse on generative metrics and require more ODE steps.
Ranking by kernel methods favours our method and RFM over both RG-VFM clusters.
A potential cause of this discrepancy is that optimal transport is more sensitive to the ratio of samples generated nearby each Dirac and more affected by the \((0, \frac14)\) flow branch.
RG-VFM has by far the highest inference NFEs and wall times.
This is consistent with the rapid spatiotemporal variation we observe in the learned RG-VFM field
in the absence of compatibility (see also \cref{fig:t1-vecfield-slice-x0}).
\begin{table}[]
\centering
\caption{\(\mathbb{T}^1\) analytic test case.
Mean\(\pm\)std across seeds for distributional metrics.
\num{20000} samples per seed.
Median {\scriptsize\([Q_1, Q_3]\)}
for NFE and wall time of
dopri5 with tolerance \(10^{-5}\).
RG-VFM seeds split by \(\|v^\theta_\tau(0)\|_g\) into two clusters (see \cref{fig:t1-probe-bimodality}): `lower' (\(<0.1\)) and `upper' (\(>4.0\)).}\label{tab:t1-benchmark-distmetrics}
\resizebox{\textwidth}{!}{%
\begin{tabular}{lccccccc}
\toprule
&\textbf{Error} $\downarrow$ &\(\boldsymbol{W_1}\downarrow\) &\(\boldsymbol{W_2}\downarrow\) &\textbf{Energy dist.} $\downarrow$ &\textbf{MMD}\(^{\mathbf{2}}\) $\downarrow$ &\textbf{NFE} $\downarrow$ &\textbf{Time} $\downarrow$ \\
Method {\scriptsize (\# seeds)} & {\scriptsize \(\|v^\theta_\tau(0)\|_g\)} & & & {\scriptsize via \(\sqrt{\dist_g}\)} & {\scriptsize Laplace} & {\scriptsize dopri5} & {\scriptsize dopri5, (s)} \\
\midrule
RFM {\scriptsize(30)} &\mstdthree{0.070}{0.040} &\mstd{0.446}{0.20} &\mstd{0.879}{0.30} &\mstdthree{0.214}{0.064} &\mstdthree{0.055}{0.028} &497\,{\scriptsize[426,\,578]} &1.92\,{\scriptsize[1.65,\,2.22]} \\
RG-VFM {\scriptsize(30)} &\mstdthree{2.080}{2.078} &\mstd{0.538}{0.29} &\mstd{0.830}{0.32} &\mstdthree{0.311}{0.191} &\mstdthree{0.085}{0.079} &\num{36839}\,{\scriptsize[\num{33796},\,\num{60574}]} &141.4\,{\scriptsize[130.3,\,233.9]} \\
\quad upper cluster {\scriptsize(15)} &\mstdthree{4.123}{0.042} &\mstd{0.653}{0.30} &\mstd{0.997}{0.31} &\mstdthree{0.370}{0.212} &\mstdthree{0.107}{0.089} &\num{55436}\,{\scriptsize[\num{32228},\,\num{70115}]} &212.9\,{\scriptsize[124.0,\,271.9]} \\
\quad lower cluster {\scriptsize(15)} &\mstdbestthree{0.037}{0.019} &\mstdbest{0.423}{0.24} &\mstdbest{0.663}{0.25} &\mstdthree{0.251}{0.151} &\mstdthree{0.063}{0.063} &\num{36638}\,{\scriptsize[\num{35411},\,\num{41675}]} &140.4\,{\scriptsize[136.8,\,161.3]} \\
Our method {\scriptsize(30)} &\mstdthree{0.060}{0.041} &\mstd{0.434}{0.18} &\mstd{0.836}{0.31} &\mstdbestthree{0.212}{0.055} &\mstdbestthree{0.054}{0.024} &\textbf{443}\,{\scriptsize[388,\,582]} &\textbf{1.71}\,{\scriptsize[1.49,\,2.24]} \\
\bottomrule
\end{tabular}%
}
\end{table}
\begin{wrapfigure}[]{r}{0.45\textwidth} 
  \vspace{-2em}
  \centering
  \includegraphics[width=\linewidth]{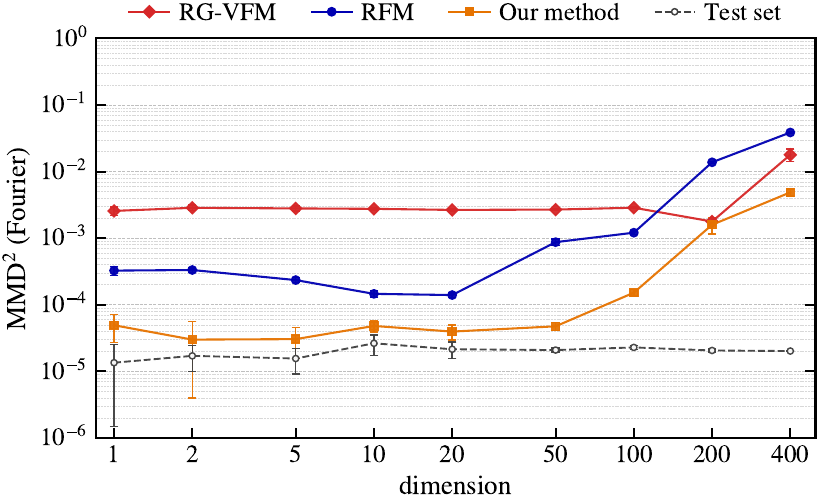}
  \caption{Dimension sweep on flat tori: Fourier-kernel MMD\(^2\downarrow\).
  Mean\(\pm\)std, 5 seeds.
  \num{20000} samples per seed.}%
  \label{fig:torus-dim-sweep-fourier-mmd2}
\end{wrapfigure}
\paragraph{High-dimensional product tori}
Following the setup of \citet{bortoli2022riemannian} and \citet{chen2024rfm},
we evaluate on product tori \(\mathbb{T}^d\) with uniform base and wrapped Gaussian data distribution of scale \(\sigma=0.2\).
This setup is of particular interest because the scale \(0.2\ll 2\pi\) is so small that the optimal NLL of a continuous normalizing flow
is analytic.
Recall that \(\E_{p_1}[-\log p_\theta(x_1)]=\mathcal{H}[p_1]+\operatorname{KL}(p_1\parallel p_\theta)\)
so the differential entropy of the true data gives a lower bound.
The density is isotropic  so
the per-dimension NLL lower bound is \(\mathcal{H}[p_1]\approx\frac12\log(2\pi e\sigma^2)\approx-0.1905\)
and exact to 4 decimal places for all dimensions here.
We see in \cref{tab:torus-dim-sweep-nll} that NLL is uninformative for RG-VFM
and several values reported by \citet{chen2024rfm} are below the entropic lower bound,
which may be due to use of Rademacher vectors to estimate divergence.
To mitigate this, we use exact divergence via \texttt{jacrev}.
In \cref{sec:supplementary-experiments},
we run the checkerboard experiment
of \citet{zaghen2026rgvfm}, testing a variety of solvers and tolerances on the trained model
to sanity-check that the NLL results of RG-VFM
are not an artefact of our implementation.

In \cref{fig:torus-dim-sweep-fourier-mmd2}, the MMD\(^2\) gives an interesting comparison.
We see that before dimension 200,
there is a consistent ordering of methods with our method best, RFM second, and RG-VFM last.
Beyond this, our method maintains an advantage but the ranks of RFM and RG-VFM switch.
This is consistent with the idea of \cref{sec:rfm-likelihood} that awareness of geometric constraints is important for scalability.
\begin{table*}[]

\centering
\small
\setlength{\tabcolsep}{4pt}
\caption{High-dimensional flat tori: NLL/dim\(\downarrow\).
Mean\(\pm\)std, 5 seeds for our method, RFM (Reproduced), and RG-VFM.
\num{20000} samples/seed.
Target is a wrapped Gaussian \((\sigma=0.2)\).
Values shaded \colorbox{gray!20}{gray} are below the entropic bound.
The best of the \textit{un}-shaded are in \textbf{bold}.
RFM (Paper results) and Riemannian Score-Based \citep{bortoli2022riemannian} values are extracted from fig.~5 of \citet{chen2024rfm}
using WebPlotDigitizer \citep{WebPlotDigitizer} with a uniform uncertainty estimate of 0.01.}\label{tab:torus-dim-sweep-nll}
\begin{tabular}{rccccc}
\toprule
$d$ &\shortstack{\textbf{Our method}\\\phantom{\scriptsize phantom}} & \shortstack{\textbf{RFM}\\{\scriptsize Paper results}} & \shortstack{\textbf{RFM}\\{\scriptsize Reproduced}} & \shortstack{\textbf{RG-VFM}\\{\phantom{\scriptsize phantom}}} & \shortstack{\textbf{Riemannian}\\\textbf{Score-Based}} \\
\midrule
\(1\) & \mstdbestfour{-0.1904}{0.001} &\mstdunfeasfour{-0.2051}{0.01} &\mstdfour{-0.1715}{0.006} &\mstdunfeasfour{-0.2171}{0.001} & \mstdfour{-0.1482}{0.01} \\
\(2\) & \mstdbestfour{-0.1900}{0.005} &\mstdfour{-0.1495}{0.01} &\mstdfour{-0.1658}{0.004} &\mstdunfeasfour{-0.2142}{0.003} & \mstdfour{-0.1412}{0.01} \\
\(5\) & \mstdbestfour{-0.1879}{0.004} &\mstdfour{-0.1851}{0.01} &\mstdfour{-0.1681}{0.003} &\mstdunfeasfour{-0.2157}{0.002} & \mstdfour{-0.1519}{0.01} \\
\(10\) & \mstdfour{-0.1827}{0.004} &\mstdbestfour{-0.1838}{0.01} &\mstdfour{-0.1678}{0.002} &\mstdunfeasfour{-0.2211}{0.002} & \mstdfour{-0.1404}{0.01} \\
\(20\) & \mstdbestfour{-0.1871}{0.001} &\mstdunfeasfour{-0.2105}{0.01} &\mstdfour{-0.1673}{0.001} &\mstdunfeasfour{-0.2935}{0.011} & \mstdfour{-0.1230}{0.01} \\
\(50\) & \mstdbestfour{-0.1847}{0.001} &\mstdunfeasfour{-0.2111}{0.01} &\mstdfour{-0.1687}{0.001} &\mstdunfeasfour{-0.3524}{0.010} & \mstdfour{-0.0840}{0.01} \\
\(100\) & \mstdbestfour{-0.1758}{0.003} &\mstdunfeasfour{-0.2065}{0.01} &\mstdfour{-0.1553}{0.001} &\mstdunfeasfour{-0.4091}{0.008} & \mstdfour{0.0355}{0.01} \\
\(200\) & \mstdfour{-0.1643}{0.002} &\mstdbestfour{-0.1701}{0.01} &\mstdfour{-0.0993}{0.001} &\mstdunfeasfour{-0.3368}{0.017} & \mstdfour{0.0745}{0.01} \\
\(400\) & \mstdfour{-0.1374}{0.002} &\mstdbestfour{-0.1432}{0.01} &\mstdfour{-0.0232}{0.002} &\mstdunfeasfour{-1.1979}{0.040} & \mstdfour{0.1423}{0.01} \\
\bottomrule
\end{tabular}
\end{table*}
\begin{table*}[]

\centering
\small
\setlength{\tabcolsep}{5pt}
\caption{Earth and climate science datasets on the sphere: test NLL\(\downarrow\).
Mean\(\pm\)std, 5 seeds.}\label{tab:climate-sphere-results-nll}
\begin{tabular}{lcccc}
\toprule
& \textbf{Volcano} &\textbf{Earthquake} &\textbf{Flood} &\textbf{Fire} \\
Dataset size (train + val + test) &{\scriptsize 827} &{\scriptsize \num{6120}} &{\scriptsize \num{4875}} &{\scriptsize \num{12809}} \\
\midrule
Riemannian CNF {\scriptsize\citep{mathieu2020riemannian}} &\mstd{-0.97}{0.15} &\;\;\;\mstd{0.19}{0.04} &\mstd{0.90}{0.03} &\mstd{-0.66}{0.05} \\
Moser Flow {\scriptsize\citep{rozen2021moser}} &\mstd{-2.02}{0.42} &\mstd{-0.09}{0.02} &\mstd{0.62}{0.04} &\mstd{-1.03}{0.03} \\
CNF Matching {\scriptsize\citep{ben-hamu2022cnf_matching}} &\mstd{-2.38}{0.17} &\mstd{-0.38}{0.01} &\mstdbest{0.25}{0.02} &\mstd{-1.40}{0.02} \\
Riemannian Score-Based {\scriptsize\citep{bortoli2022riemannian}} &\mstd{-4.92}{0.25} &\mstd{-0.19}{0.07} &\mstd{0.45}{0.17} &\mstd{-1.33}{0.06} \\
Riemannian Diffusion Model {\scriptsize\citep{huang2022riemannian}} &\mstd{-6.61}{0.97} &\mstdbest{-0.40}{0.05} &\mstd{0.43}{0.07} &\mstd{-1.38}{0.05} \\
\midrule
RFM {\scriptsize(Paper results)} &\mstd{-7.93}{1.67} &\mstd{-0.28}{0.08} &\mstd{0.42}{0.05} &\mstd{-1.86}{0.11} \\
RFM {\scriptsize(Reproduced)} &\mstd{-6.963}{1.71} &\mstd{-0.214}{0.051} &\mstd{0.484}{0.043} &\mstd{-1.766}{0.061} \\
Our method &\mstdbest{-9.685}{1.94} &\mstd{-0.279}{0.040} &\mstd{0.472}{0.040} &\mstdbest{-2.105}{0.107} \\
\bottomrule
\end{tabular}
\end{table*}
\paragraph{Earth and climate science datasets on the sphere}
Following \citet{chen2024rfm},
we apply our method to datasets compiled from
\citet{noaa2020a_earthquakes,noaa2020b_volcanoes,brakenridge2017floods}, and \citet{eosdis2020activefire}
by \citet{mathieu2020riemannian}.
In \cref{tab:climate-sphere-results-nll}, we see uniform improvement relative to our reproduction of the baseline.
While our method, like RFM, struggles on the earthquake and flood datasets, it improves over prior work on the volcano and fire datasets,
which have the lowest and highest dataset sizes respectively
and have highly concentrated regions requiring high
fidelity \citep[see plots in][]{bortoli2022riemannian}.
The models outperforming our method and RFM (Reproduced) do not require implicitly specifying a particular transport path.
This allows them to optimize the path. The trade-off is lack of scalability above low manifold dimension.

\paragraph{High-dimensional human pose data}
We further evaluate on a large-scale dataset of human poses from AMASS \citep{mahmood2019amass},
embedded in \({\mathrm{SO}(3)}^{19}\) and
following \citet{dunkel2024huproso3} in the partition of the dataset we use.
We train RFM, RG-VFM, and our method on \num{11823971} samples from this partition.
We also compare with the released checkpoint of HuProSO3 \citep{dunkel2024huproso3}, which was trained on the full partition, including our test samples.
We see in \cref{tab:pose-results} that our method outperforms according to both sample-based metrics.
This supports the use of our method, especially in high-dimensional applications where sample quality is of particular importance.

\begin{table}[]
\centering
\small
\caption{Human pose on \({\mathrm{SO}(3)}^{19}\).
Mean\(\pm\)std, 5 seeds, except HuProSO3, which uses the released checkpoint \texttt{id\_0900}.
\num{20000} samples per seed.}\label{tab:pose-results}
\begin{tabular}{lccc}
\toprule
 & \textbf{NLL} \(\downarrow\) & \textbf{Energy distance} \(\downarrow\) & \textbf{Gaussian MMD}\(^{\mathbf{2}}\) \(\downarrow\) \\
Method& {\scriptsize w.r.t.\ Haar}& {\scriptsize chordal, \(\times 10^3\)} & {\scriptsize chordal, \(\times 10^3\)} \\
\midrule
HuProSO3 & \num[round-mode=places,round-precision=2]{-137.78} & \num[round-mode=places,round-precision=2]{56.76} & \num[round-mode=places,round-precision=3]{9.746} \\
RG-VFM & \pmstd{\num[round-mode=places,round-precision=2]{-150.58}}{\num[round-mode=places,round-precision=2]{0.41}} & \pmstd{\num[round-mode=places,round-precision=2]{18.60}}{\num[round-mode=places,round-precision=2]{1.72}} & \pmstd{\num[round-mode=places,round-precision=3]{3.601}}{\num[round-mode=places,round-precision=3]{0.308}} \\
RFM & \pmstd{{\bfseries \num[round-mode=places,round-precision=2]{-198.11}}}{\num[round-mode=places,round-precision=2]{0.81}} & \pmstd{\num[round-mode=places,round-precision=2]{2.16}}{\num[round-mode=places,round-precision=2]{0.20}} & \pmstd{\num[round-mode=places,round-precision=3]{0.379}}{\num[round-mode=places,round-precision=3]{0.035}} \\
Our method & \pmstd{\num[round-mode=places,round-precision=2]{-197.41}}{\num[round-mode=places,round-precision=2]{0.44}} & \pmstd{{\bfseries \num[round-mode=places,round-precision=2]{1.68}}}{\num[round-mode=places,round-precision=2]{0.04}} & \pmstd{{\bfseries \num[round-mode=places,round-precision=3]{0.293}}}{\num[round-mode=places,round-precision=3]{0.008}} \\
\bottomrule
\end{tabular}
\end{table}

\section{Limitations}\label{sec:limitations}
Similar to RFM \citep{chen2024rfm} and RG-VFM \citep{zaghen2026rgvfm},
our method requires specialization per manifold.
Both RFM and RG-VFM have the advantage of only requiring well-studied manifold objects.
Tables of metric tensors and logarithmic maps are readily available;
our method brings the additional work of devising an efficient way
to compute and invert the mean map and pretabulate the covariance components.
This generally involves identifying a symmetry in the geometry reducing the integration to low-dimensional quadrature,
differentiating under the integral sign, and crafting a numerically stable implementation.
We have not implemented our method on general geometries to the extent of \citet{chen2024rfm}
and even RFM is costly and complex in these cases due to lack of closed-form geodesics.
In this setting, probability paths are defined via diffusive or biharmonic premetrics
so our method would need to perform quadrature over these
or employ mean-variance estimation via Monte Carlo, which may be prohibitive.
We discuss these potential extensions in \cref{sec:less-symmetric}.
Like RG-VFM, we target data with known symmetries and geometric structure
rather than arbitrary 3D meshes, but it may be possible to extend our method to these settings.
Unlike RG-VFM, we do not require homogeneity, although it is useful in practice.

\section{Related Work}\label{sec:related-work}
\citet{guzman-cordero2025exponential} previously suggested,
based on the ideas of \citet{eijkelboom2024variational},
that likelihoods over endpoints should be exponential families with a parameter of interest as the sufficient statistic.
Their work targets heterogeneous tabular data, relying on the identity \(u_t(x_t)=\E[\uc(x_t\mid x_1)]=\uc(x_t\mid\E[x_1])\)
via linearity of conditional paths
to estimate the marginal field from the endpoint prediction, similar to RG-VFM\@.
As this work establishes, the nonlinearity of conditional paths on manifolds
means that such constructions do not generally possess compatibility.

Prior intrinsic generative models include Riemannian CNFs \citep{mathieu2020riemannian},
Moser Flow \citep{rozen2021moser}, CNF Matching \citep{ben-hamu2022cnf_matching},
and score-based \citep{bortoli2022riemannian} and ELBO-based diffusion \citep{huang2022riemannian}.
However, as \citet{chen2024rfm} discuss,
these require simulated random walks \citep{huang2022riemannian,bortoli2022riemannian},
approximations to the heat kernel \citep{bortoli2022riemannian},
heavy divergence computation \citep{huang2022riemannian}
or Hutchinson estimation \citep{mathieu2020riemannian}.
Earlier simulation-free methods \citep{rozen2021moser,ben-hamu2022cnf_matching}
scale poorly to high dimensions and general geometries.

With the exception of RG-VFM, recent manifold extensions of flow matching
mostly specialize RFM to particular
geometries or applications \citep[as in][]{cheng2024categorical,
miller2024flowmm,cheng2025stiefel}, or propose an alternative method for a
single manifold class \citep[as in][]{collas2025riemannian}.
An orthogonal line of work focuses on accelerated inference. Consistency models
\citep{song2023consistency} distil one-step maps from intermediate states
to data, and Riemannian consistency models
\citep{cheng2025riemannian} port this idea to manifolds.
Similarly, flow maps
\citep{boffi2025flowmaps} use self-distillation for few-step
inference, and \citet{davis2026generalisedflowmaps} propose Generalized Flow Maps
for manifolds.

\section{Conclusion}\label{sec:conclusion}
At population optima,
likelihood maximization generically fails to recover the true marginal velocity.
Our compatibility condition characterizes
when it succeeds.
Our residual bound shows that compatibility is enough to
avoid this source of error even if the likelihood is misspecified.
On compact manifolds, prior work satisfying the condition
assigns Gaussian likelihood to infeasible velocities.
We propose an exponential family in endpoint space
that preconditions the loss while being both compatible and support-aware.
Empirically, our method recovers the expected velocity in the analytic test case,
improves NLL on two spherical benchmarks,
and improves sample-based metrics on high-dimensional tori and human poses.
More broadly, in choosing a likelihood for flow matching,
we must consider not only the intrinsic geometry,
but also whether optimizing yields the right flow.

\begin{ack}
This research is supported through the EPSRC ROSEHIPS grant [EP/W005816/1].
\end{ack}

{
\small
\bibliographystyle{plainnat}
\bibliography{refs}
}

\newpage
\appendix

\titlecontents{lsection}
[2em]
{\addvspace{2pt}}
{\contentslabel{1.5em}}
{}
{\titlerule*[0.6em]{.}\contentspage}

\startcontents[appendices]
\printcontents[appendices]{l}{1}{
    \section*{Appendix}
    \setcounter{tocdepth}{1}
}

\paragraph{Organization of the appendix}
We prove and derive the main theoretical results in \cref{sec:proofs};
we provide additional experimental results and ablations in \cref{sec:supplementary-experiments};
we share implementation details and training algorithms in \cref{sec:method-details};
we specialize the method to common manifolds of interest in \cref{sec:geometric-specializations};
we outline the experimental protocols and hyperparameters in \cref{sec:experiment-protocols};
and we note performance and runtime analytics in \cref{sec:perf}.

\section{Proofs and theoretical derivations}\label{sec:proofs}
\subsection{Compatibility bound}\label{sec:proof-affine-residual-bound}

\affineresidualbound*
\begin{proof}
The residual $r_{x,t}$ is defined as that of the least-squares problem
\[
(A_{x,t}^*,b_{x,t}^*)
\in \arg\min_{A,b}
\E_{q^{\theta^*}}\left[
\|\psi_{x,t}(z)-As^*(z)-b\|_g^2
\right].
\]
In particular,
\[\psi_{x,t}=A_{x,t}^*s^*+b_{x,t}^*+r_{x,t}.\]
Now taking expectations under each distribution $p$ and $q^{\theta^*}$,
\[
\begin{aligned}
\E_p[\psi_{x,t}]
&=A_{x,t}^*\underbrace{\E_p[s^*]}_{0}+b_{x,t}^*+\E_p[r_{x,t}],\\
\E_{q^{\theta^*}}[\psi_{x,t}]
&=A_{x,t}^*\underbrace{\E_{q^{\theta^*}}[s^*]}_{0}+b_{x,t}^*
  +\E_{q^{\theta^*}}[r_{x,t}].
\end{aligned}
\]
In the subtraction of these expressions,
the \(b_{x,t}^*\) terms cancel as
\[
\begin{aligned}
\E_p[\psi_{x,t}]
-\E_{q^{\theta^*}}[\psi_{x,t}]
&=\E_p[r_{x,t}]
-\E_{q^{\theta^*}}[r_{x,t}]\\
&=\E_{q^{\theta^*}}\left[
\left(\frac{\mathrm dp}{\mathrm d q^{\theta^*}}-1\right)
r_{x,t}
\right].
\end{aligned}
\]

For any unit vector $w\in T_x\mathcal M$,
Cauchy--Schwarz yields
\[
\begin{aligned}
\left|
\left\langle w,\,
\E_p[\psi_{x,t}]
-\E_{q^{\theta^*}}[\psi_{x,t}]
\right\rangle_g
\right|&\le
\sqrt{\chi^2(p\|q^{\theta^*})}
\sqrt{\E_{q^{\theta^*}}[
    \langle w,r_{x,t}\rangle_g^2
]}\\
&=
\sqrt{\chi^2(p\|q^{\theta^*})}
\sqrt{
\left\langle w,\,
\E_{q^{\theta^*}}[r_{x,t}\otimes r_{x,t}]w
\right\rangle_g
}\\
&\le
\sqrt{\chi^2(p\|q^{\theta^*})}
\sqrt{
\left\|\E_{q^{\theta^*}}[
r_{x,t}\otimes r_{x,t}]
\right\|_{\mathrm{op},g}
}.
\end{aligned}
\]
Supremum over unit vectors $w$ concludes the proof.
\end{proof}

\subsection{Motivating example with smooth \texorpdfstring{$\mathbb T^1$}{T1} data}\label{sec:example-smoothed}
Here, we check whether the example of \cref{sec:motivating-example}
survives after smoothing the mixture of Diracs into that of
wrapped Gaussians,
\begin{align}
p_0 &= \frac12\mathcal{WN}(-\alpha,\sigma_0^2)
     + \frac12\mathcal{WN}(+\alpha,\sigma_0^2),\\
p_1 &= \frac12\mathcal{WN}(-\beta,\sigma_1^2)
     + \frac12\mathcal{WN}(+\beta,\sigma_1^2).
\end{align}
As before, let $\tau\in(0,\tfrac12)$,
$\beta\in(\tfrac\pi2,\pi(1-\tau))$, and
$\alpha=\tau\beta/(1-\tau)$.
Geodesics joining the component centres
$(+\alpha,-\beta)$ and $(-\alpha,+\beta)$
meet at $x_\tau=0$.
For a coupled pair with geodesic velocity
$w=\log_{x_0}(x_1)\in[-\pi,\pi]$ at $x_t=x$,
the endpoints satisfy $x_0=x-tw$ and
$x_1=x+(1-t)w$ (modulo $2\pi$).
As the change of variables has unit Jacobian, under
independent coupling the conditional density satisfies
\begin{equation}
q_t(w\mid x)\propto
p_0(x-tw)p_1(x+(1-t)w).
\end{equation}
Both mixtures satisfy $p_i(x)=p_i(-x)$, hence
$q_t(w\mid0)=q_t(-w\mid0)$.
For all $\sigma_0,\sigma_1>0$ and $0\le t<1$,
\begin{equation}
u_t(0)=\int_{-\pi}^{\pi}w\,q_t(w\mid0)\,\mathrm dw=0.
\end{equation}

\paragraph{Local behaviour of the marginal velocity}
The derivative $u_\tau'(0)$ determines the direction
of the true marginal field near the crossing.
Let $\tau=\tfrac14$, $\alpha=\tfrac13\beta$, and
$\sigma_0=\sigma_1=\sigma$.
For small $\sigma$, the two crossing component pairs
dominate $q_{\frac14}(w\mid x)$ near $x=0$.
We calculate their conditional velocities via the
corresponding Gaussian components on $\mathbb R$.
For pairs centred at $(+\alpha,-\beta)$ and
$(-\alpha,+\beta)$, we have, via completion of the square,
\begin{align}
\mathcal N\left(x-\frac14w;\,\pm\alpha,\sigma^2\right)
\mathcal N\left(x+\frac34w;\,\mp\beta,\sigma^2\right)
&\propto \exp\left(
-\frac{(x-\tfrac14w\mp\alpha)^2+(x+\tfrac34w\pm\beta)^2}{2\sigma^2}
\right)\\
&=
\exp\left(
-\frac{5}{16\sigma^2}
\left(w+\frac{4x}{5}\pm(\alpha+\beta)\right)^2
-\frac{4x^2}{5\sigma^2}
\right).
\end{align}
The two Gaussian products have the same dependence
on $x$ (after integration over $w\in\mathbb R$), so they also have equal posterior
weights.
The mean over pairs who cross is
\begin{align}
\E\left[w\mid x_{\frac14}=x,\text{ crossing pairs}\right]
&=\frac12\left(-(\alpha+\beta)-\frac45x
                  +(\alpha+\beta)-\frac45x\right)\\
&=-\frac45x.
\end{align}
The same-sign pairs have mean intermediate positions
$\E[x_{\frac14}]=\pm\tfrac12\beta$, compared with zero
for the crossing pairs. All four Gaussian pairs
have $\operatorname{Var}[x_{\frac14}]=\tfrac58\sigma^2$.
At the crossing, both pairs of the same sign have density
$e^{-\beta^2/(5\sigma^2)}$ times that of a crossing pair.
To obtain the derivative for the wrapped distributions, we must also account for periodic copies and the restriction $|w|\le\pi$.
We have
\begin{equation}
u_t(x)
=\frac{\int_{-\pi}^{\pi}w\,p_0(x-tw)p_1(x+(1-t)w)\,\mathrm dw}
{\int_{-\pi}^{\pi}p_0(x-tw)p_1(x+(1-t)w)\,\mathrm dw}.
\end{equation}
Since $u_t(0)=0$, differentiating gives
\begin{align}
u_t'(0)
&=\frac{\int_{-\pi}^{\pi}w\left[p_0'(-tw)p_1((1-t)w)
+p_0(-tw)p_1'((1-t)w)\right]\,\mathrm dw}
{\int_{-\pi}^{\pi}p_0(-tw)p_1((1-t)w)\,\mathrm dw}\\
&=
\E_{q_t(\cdot\mid0)}
\left[
w\left(
\frac{p_0'(-tw)}{p_0(-tw)}
+\frac{p_1'((1-t)w)}{p_1((1-t)w)}
\right)
\right].
\end{align}
At $x_{\frac14}=0$, the endpoints satisfy $x_0=-\frac14w$
and $x_1=\frac34w$.
Both endpoints coincide with their Gaussian means
for only the crossing pairs, at
$w=\pm\tfrac43\beta\in(-\pi,\pi)$.
The other Gaussian products cannot attain their maximum
values simultaneously and the crossing peaks lie
some fixed distance from the integration boundaries.
Hence, other Gaussian terms and the tails,
added when extending the crossing integrals to
$\mathbb R$, change the numerator derivative and
denominator by at most $C\sigma^{-K}e^{-c/\sigma^2}$
for some $C,c>0$ and finite $K$.
The crossing terms contribute a denominator of order
$\sigma^{-1}$, so
\begin{equation}
u'_{\frac14}(0)=-\frac45+\mathcal O(e^{-c'/\sigma^2})
\end{equation}
for some $c'>0$.
Reflection means $u_{\frac14}(x)$ is odd, implying that
\begin{equation}
u_{\frac14}(x)=
\left(-\frac45+\mathcal O(e^{-c'/\sigma^2})\right)x
+\mathcal O_\sigma(x^3).
\end{equation}
That is to say, for sufficiently narrow components, the true marginal
velocity points toward zero on both sides of the crossing.

\paragraph{Global RG-VFM optimum for smooth data}
RG-VFM minimizes the Fr\'echet function of the
conditional endpoint distribution.
At $x_\tau=0$, the same posterior $q_\tau(w\mid0)$
has endpoint $x_1=(1-\tau)w$, so
\begin{equation}
F(\mu)=\int_{-\pi}^{\pi}
\dist_g((1-\tau)w,\mu)^2q_\tau(w\mid0)\,\mathrm dw.
\end{equation}
By symmetry, $F(\pi+\delta)=F(\pi-\delta)$,
meaning $\mu=\pi$ is always a stationary point.
Let $Y=(1-\tau)|w|$ under $q_\tau(w\mid0)$,
with density $\rho$.
Again by symmetry, it will suffice to compare $\pi$ with
$\mu=\pi-\delta$ for $0\le\delta\le\pi$.
For symmetric endpoints $\pm y$, the Fr\'echet
function changes by
\begin{align}
\frac12
\dist_g(\pi-\delta,y)^2+
\frac12\dist_g(\pi-\delta,-y)^2
-(\pi-y)^2
&=\begin{cases}
\delta^2,&\delta\le y,\\
\delta^2-2\pi(\delta-y),&\delta>y
\end{cases}\\
&=\delta^2-2\pi[\delta-y]_+.
\end{align}
Now, averaging over $Y,$
\begin{equation}
F(\pi-\delta)-F(\pi)
=\delta^2-2\pi\int_0^\delta
(\delta-y)\rho(y)\,\mathrm dy.
\end{equation}
In this integral, the endpoint mass (only below $\delta$)
somewhat offsets the quadratic cost of moving away from $\pi$.
Choose $b\in(\pi/2,\beta)$ and let
$M=\sup_{0\le y\le b}\rho(y)$.
If $M<1/\pi$ and $b\operatorname{\mathbb{P}}(Y\ge b)>\pi/2$,
then, for $0<\delta\le b$,
\begin{align}
F(\pi-\delta)-F(\pi)
&\ge\delta^2-2\pi M\int_0^\delta(\delta-y)\,\mathrm dy\\
&=\underbrace{(1-\pi M)}_{>0}\delta^2.
\end{align}
For $b\le\delta\le\pi$,
\begin{align}
F(\pi-\delta)-F(\pi)
&=\delta^2-2\pi\E\left[[\delta-Y]_+\right]\\
&=\delta^2-2\pi\delta+2\pi\E[\min(\delta,Y)]\\
&\ge(\pi-\delta)^2
+\underbrace{2\pi b\operatorname{\mathbb{P}}(Y\ge b)-\pi^2}_{>0}.
\end{align}
For the wrapped Gaussian
mixtures, $Y$ concentrates at $\beta$ as
$\sigma_0,\sigma_1\to0$, so $M\to0$ and
$\operatorname{\mathbb{P}}(Y\ge b)\to1$.
The strict inequalities hold for sufficiently small
component variances and persist under small perturbations
(within the family of symmetric mixtures).
Hence $\mu=\pi$ is the unique global optimum for RG-VFM
on an open region within the parameterized family of symmetric smooth mixtures.
At $\mu=\pi$, either minimum-length geodesic from
zero has length $\pi$.
Because of this, the smoothed case has the same branch ambiguity as for the mixture of Diracs.
RG-VFM decodes this endpoint
into a velocity of magnitude $\pi/(1-\tau)$,
while symmetry gives $u_\tau(0)=0$.
Again, the population-optimal velocity error is
\begin{equation}
\left\|v_\tau^{\theta^*}(0)-u_\tau(0)\right\|_g
=\frac{\pi}{1-\tau}.
\end{equation}

\subsection{Characterization of the Gaussian carrier}\label{sec:carrier-choice}

As the theory of compatibility does not select a universal carrier,
we impose a small number of heuristic desiderata.
We desire for the carrier to be centred, isotropic, factorizable on product manifolds,
give ordinary CFM in the Euclidean case, linearize to RFM in the limit \(t\to1\),
and be normalizable on manifolds of interest.
For a radially isotropic carrier, coordinates on product manifolds
enter only through the joint norm \(\lVert\uc\rVert_g\).
Following the same derivation as for Herschel--Maxwell, if we write such a carrier as
\(f(\lVert u\rVert_g^2)\), factorization yields
\begin{equation}
\frac{f(a+b)}{f(0)}=\frac{f(a)}{f(0)}\frac{f(b)}{f(0)},
\end{equation}
where $a,b\geq0$.
Hence, by continuity, \(f(a)=f(0)e^{-\beta a}\).
Geodesic exponentials provide a natural setting to sweep over carrier tailedness in ablation (see \cref{sec:carrier-ablation}).
Terminal locality excludes the constant and growing choices.
This means that \(\beta>0\), and
equating the Euclidean covariance to \(I\) selects \(\beta=\tfrac12\) uniquely.
The requirements narrow to the present choice of Gaussian carrier.

\subsection{Limiting behaviour and relationship to Riemannian flow matching}\label{sec:rfm-special-case}

\paragraph{Manifolds with diffeomorphic exponential map}
On these \textit{non}-compact manifolds, a special choice of carrier recovers Riemannian flow matching.
We see that the Jacobian term introduces an artificial linearization of the endpoint-likelihood model,
collapsing the variance to the identity.
Let the carrier be
\begin{equation}
c_{x,t}(x_1)=\exp\left(-\frac12\lVert\uc\rVert_g^2\right)
\left|\det D\uc\right|.
\end{equation}
Here, \(\left|\det D\uc\right|\) is the Jacobian determinant with respect to the Riemannian volume measures.
The endpoint exponential family
\begin{equation}
q_{t}(\mathrm{d}x_1 | x_t; \eta)=c_{x,t}(x_1)\exp\left(\langle\eta,\uc\rangle_g-A_{x,t}(\eta)\right)\,
\mathrm{dVol}_g(x_1)
\end{equation}
pushes forward under \(\uc\) to a density on \(T_x\mathcal M\),
\begin{align}
q_u(u;\eta)
&\propto\exp\left(-\frac12\lVert u\rVert_g^2+\langle\eta,u\rangle_g\right)
\overbrace{\left|\det D\uc\right|}^{\text{carrier}}
\underbrace{\left|\det D\uc\right|^{-1}}_{\text{change of variables}}\\
&=\exp\left(-\frac12\lVert u-\eta\rVert_g^2+\frac12\lVert\eta\rVert_g^2\right).
\end{align}
Thus the pushforward is \(\mathcal N(\eta,g^{-1})\), and
\begin{align}
A_{x,t}(\eta)
&=\log\int_{T_x\mathcal M}
\exp\left(-\frac12\lVert u\rVert_g^2+\langle\eta,u\rangle_g\right)\,\mathrm{dVol}_g(u)\\
&=\frac12\lVert\eta\rVert_g^2+\frac d2\log(2\pi).
\end{align}
In consequence, \(m_{x,t}(\eta)=\eta\), \(V_{x,t}=I\),
and the likelihood-based loss is equivalent to RFM up to the factor \(\frac12\).
On flat manifolds like product tori, this carrier is equivalent to our choice up to a constant factor, but the torus pushforward is truncated, as in \cref{sec:torus-specialization}.

\paragraph{Asymptotic convergence as \(t\to 1\)}
Under our choice of carrier,
as \(t\to1\), the endpoint distribution concentrates around \(x_t\), with typical distances of order \(1-t\).
In this limit, the geometry becomes locally Euclidean by linearization of the exponential map,
and the covariance operator \(V_{x,t}(\vt(x_t))\) converges to the identity,
so the sample loss approaches \(\frac12\lVert\vt(x_t)-\uc\rVert_g^2\),
the RFM sample loss up to the factor \(\frac12\).
Examples of this limit can be found in \cref{sec:geometric-specializations}.

\section{Supplementary experimental results}\label{sec:supplementary-experiments}

\subsection{Vector-field profile for the \texorpdfstring{$\mathbb T^1$}{T1} analytic test case}\label{sec:t1-vecfield-slice}

In the absence of compatibility, first-order conditions for likelihood maximization do not fully determine the learned marginal velocity in the sense we discuss in \cref{sec:compatibility-condition}.
Here, we study the learned field corresponding to the analytic example in \cref{sec:motivating-example}, in particular its behaviour with respect to ODE solvers.
The analytic predictions are visible in
$v^{\theta}_t(x{=}0)$ along $t\in[0,1]$, as shown in \cref{fig:t1-vecfield-slice-x0}.
Here, there is a branch ambiguity in that the RG-VFM decoder has the arbitrary choice of two shortest paths to \(\pi\), which approach from opposing directions.
The upper cluster for RG-VFM (the globally optimal case, defined as in \cref{fig:t1-probe-bimodality}) exhibits what is nearly a
discontinuity at $t=\tau=\tfrac14$, with the one-sided limits matching the
$\pm\tfrac{4\pi}{3}$ values predicted in \cref{sec:motivating-example}.
The lower cluster
is smooth but converges to the local minimum at zero velocity, which is not globally optimal.
RFM and our method are smooth throughout. We attribute the dopri5 NFE blow-up reported in
\cref{tab:t1-benchmark-distmetrics} to the high local Lipschitz constant of the upper-cluster
field at $t\approx\tfrac14$.
The kink at \(t=0.9\) in both RG-VFM clusters is due to the clipping of \(\frac{1}{1-t}\) to a maximum of 10.
This is necessary because the pole at $t=1$ is not automatically separable in the same way as it is for RFM and our method,
as the network output is not guaranteed to match the rate necessary for pole cancellation.
We take the clamp bounds from the implementation provided by \citet{zaghen2026rgvfm}.
\begin{figure}[!htbp]
  \centering
  \includegraphics[width=0.85\linewidth]{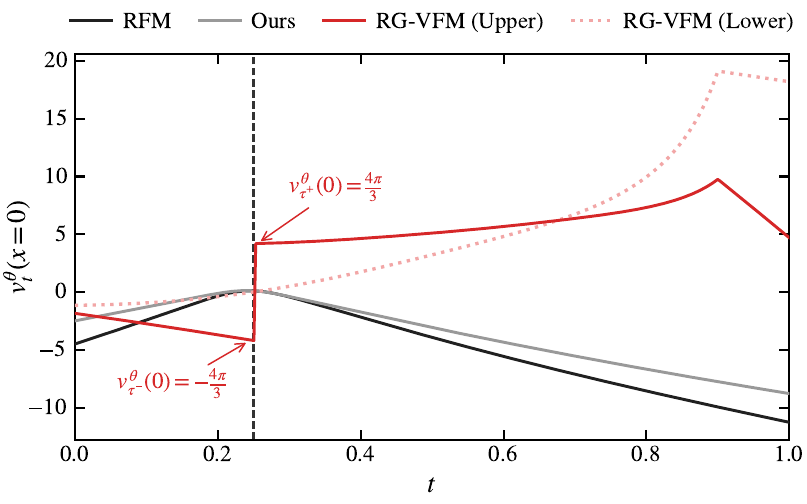}
  \caption{$v^{\theta}_t(x{=}0)$ as a function of $t$. One representative seed per
  method.
  The vertical dashed line marks $\tau=\tfrac14$. RG-VFM (Upper) jumps
  between the two globally minimizing branches at $\pm\tfrac{4\pi}{3}$ predicted
  in \cref{sec:motivating-example}. It takes
  many small steps for the ODE solver to resolve the stiff field.}\label{fig:t1-vecfield-slice-x0}
\end{figure}

\subsection{NLL evaluation of RG-VFM}\label{sec:rgvfm-nll}

To check that the uniform violations of the entropic bound we find in the RG-VFM NLLs in \cref{tab:torus-dim-sweep-nll} are not an artefact of our implementation,
we reproduce the checkerboard task of \citet{zaghen2026rgvfm} on \(\mathbb{S}^2\).
We use the published codebase on the same task
(intrinsic variant, \num{3000} epochs at the published hyperparameters)
and study the resulting NLL of the continuous normalizing flow under a wide range of solvers.
We compute the CNF likelihood via change of variables
with exact divergence (per-coordinate \texttt{torch.autograd.grad}), reporting values in bits.
Here, we observe values that are too large rather than too small.
We cannot predict which way the NLL will diverge but we hypothesize this to be a function of the data and solver.

\paragraph{Fixed-step refinement}
\Cref{fig:nll-rgvfm-stepcount} shows a sweep in Euler step count \(N\). RFM stabilizes at approximately \num{2.51}~bits
(below the entropic upper bound \(\log_2(4\pi)\approx\num{3.65}\)~bits for the uniform distribution on \(\mathbb{S}^2\))
by \(N=200\) and stays there.
RG-VFM rises from \num{1.7}~bits at \(N=50\) to \num{723.7}~bits at \(N=\num{10000}\).
Repeating the sweep with Heun (2nd order) and RK4 (4th order)
gives qualitatively identical behaviour, exceeding \num{50}~bits by \(N=500\) for both.
This eliminates the possibility of this phenomenon being an artefact of discretization
to first order.

\begin{figure}[]
\centering
\includegraphics[width=0.85\linewidth]{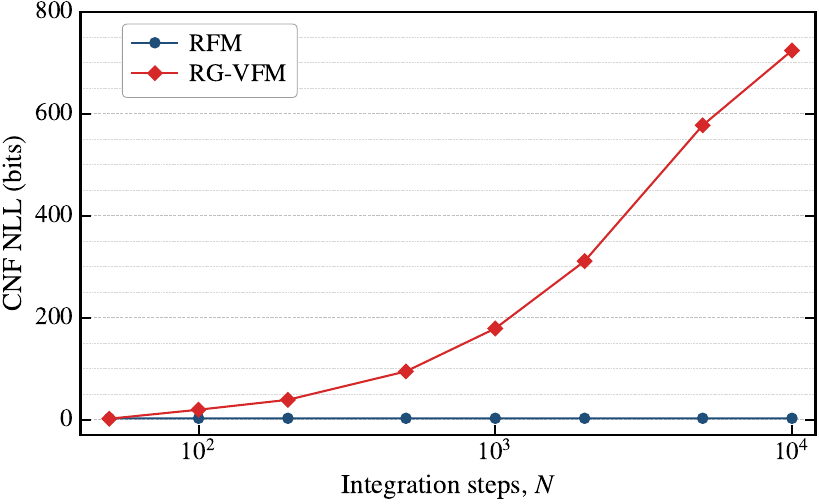}%
\caption{Continuous normalizing flow NLL on the \(\mathbb{S}^2\) checkerboard task of \citet{zaghen2026rgvfm} as a function of Euler integration steps \(N\) (log \(x\), linear \(y\)).
\num{1000} samples.
RFM converges to approximately \num{2.5}~bits; RG-VFM appears to diverge monotonically.
Exact divergence.}\label{fig:nll-rgvfm-stepcount}
\end{figure}

\paragraph{Adaptive refinement}
\Cref{tab:nll-rgvfm-tol} exhibits the same divergence under dopri5 at tightening tolerances.
RFM stays around \num{2.47}~bits while RG-VFM rises from \num{16.6}~bits to \num{533.5}~bits as
tolerance drops from \(10^{-2}\) to \(10^{-5}\).
This is \(\sim 70\times\) more NFE than RFM at the same tolerance.
RG-VFM at \(\mathrm{tol}\le 10^{-6}\) exceeds the \SI{300}{\second}
wall-clock cap;
RFM completes in under \SI{2}{\second} at the tolerances we test.

\begin{table}[]
  \centering
  \small
  \caption{dopri5 tolerance sweep on the \(\mathbb{S}^2\) checkerboard.
  \num{1000} samples.
  Per-call wall-clock cap of \SI{300}{\second}.}\label{tab:nll-rgvfm-tol}
  \begin{tabular}{rrrrr}
    \toprule
    &\multicolumn{2}{c}{RFM} &\multicolumn{2}{c}{RG-VFM} \\
    \cmidrule(lr){2-3} \cmidrule(lr){4-5}
    &\textbf{NLL} \(\downarrow\) &\textbf{NFE} \(\downarrow\) &\textbf{NLL} \(\downarrow\) &\textbf{NFE} \(\downarrow\) \\
    Tolerance &{\scriptsize (bits)} & &{\scriptsize (bits)} & \\
    \midrule
    \(10^{-2}\) &2.48 &32 &16.6 &452 \\
    \(10^{-3}\) &2.47 &50 &130.0 &\num{2318} \\
    \(10^{-4}\) &2.47 &134 &312.3 &\num{10328} \\
    \(10^{-5}\) &2.47 &494 &533.5 &\num{34694} \\
    \(10^{-6}\) &2.47 &\num{2216} &timeout &--- \\
    \(10^{-7}\) &2.47 &\num{9902} &timeout &--- \\
    \bottomrule
  \end{tabular}
\end{table}

\paragraph{Stiff solvers}
With a more generous \SI{900}{\second} per-call cap, we also run SciPy's stiff integrators (LSODA, BDF, Radau via \texttt{torchdiffeq})
at \(\mathrm{tol}\in\{10^{-2},10^{-3},10^{-5},10^{-7}\}\) (a total of \(3\times 4 = 12\) runs per model).
On RFM, 10 of the 12 runs end in \SIrange{0.2}{399}{\second} with all values within \num{0.04}~bits of \(2.47\)
while LSODA and Radau at \(\mathrm{tol}=10^{-7}\) both time out at the \SI{900}{\second} cap.
On RG-VFM, every run is capped by the \SI{900}{\second} limit.

\subsection{Carrier ablation}\label{sec:carrier-ablation}

Here, as a natural setting to compare our choice of carrier against potential alternatives,
we consider the family of geodesic exponential carriers
\begin{equation}
c_p=\begin{cases}
\exp\!\left[-\left(\dfrac{\|\uc\|_g}{a_p}\right)^p\right], & p\in\{1,2,3,4\},\\
1, & p=0,
\end{cases}
\end{equation}
with, for $p>0$,
\begin{equation}
a_p=\sqrt{2}\sqrt{\frac{\Gamma(2/p)}{\Gamma(4/p)}},
\end{equation}
chosen so that, in (2D) Euclidean space at \(\eta=0\), the covariance is \(V=I\).
We do this for the same reason as we omit the factor of \(1/2\) in the loss---this allows us to compare based on shape and not global rescaling (which would largely correspond to just a factor change in the learning rate).
We choose \(c_0\) as a degenerate carrier that does not localize the endpoint distribution around \(x_t\) as \(t\to1\).
Clearly, this is inefficient density assignment in the \(t=1\) limit, and more so than RFM (whose Gaussian tails decay outside the feasible velocity support).

As \(c_1\), \(c_3\), and \(c_4\) do not factor coordinate-wise on product manifolds, unlike \(c_0\) and \(c_2\) (in the latter case this follows from the additive separability of the squared product norm), we thus conduct the ablation on \(\mathbb{S}^2\) (not on product tori) where pretabulating works for all carriers via quadrature in 1D\@.
Specifically, we compare under the same wrapped Gaussian target (tangent space Gaussian pushed forward onto manifold) described in \cref{sec:experiments} for the \(\mathbb{T}^d\) experiments, along with RFM as a baseline.
The posterior is also determined by the uniform base, independent coupling, and manifold geometry.

In \cref{tab:carrier-ablation}, many negative medians occur for energy distance and MMD as the finite sample estimators exhibit skew although the population mean must be non-negative.
KL is computed from the difference of the test/NLL and the wrapped Gaussian entropy estimated from the same test samples (i.e., acting as a Monte Carlo control variate).
The lowest variance comparison is by KL\@. Ranked by KL, \(c_0\) is clearly worst.
Except for \(p=0\), the other powers do not show a clear difference, consistent with the carrier being a robust secondary choice after discarding alternatives filtered out by desiderata such as normalizability.

\begin{table}[]
\centering
\small

\caption{Carrier ablation on \(\mathbb{S}^2\).
Mean\(\pm\)std [median], 15 seeds.
\num{20000} samples per seed.}\label{tab:carrier-ablation}
\begin{tabular}{lccc}
\toprule
& \textbf{KL} \(\downarrow\) & \textbf{Energy distance} \(\downarrow\) & \textbf{Laplace MMD}\(^{\mathbf{2}}\) \(\downarrow\) \\
& {\scriptsize \(\times 10^3\)} & {\scriptsize \(\times 10^6\)} & {\scriptsize \(\times 10^6\)} \\
\midrule
RFM & \pmstd{\tblnumfour{0.1765}}{\tblnumfour{0.1447}}\,{\scriptsize [\tblnumfour{0.1607}]} & \mstdthree{0.945}{5.453}\,{\scriptsize [\tblnumthree{-0.336}]} & \mstdthree{2.026}{7.359}\,{\scriptsize [\tblnumthree{-0.187}]} \\
$p=0$ & \pmstd{\tblnumfour{0.5485}}{\tblnumfour{0.2773}}\,{\scriptsize [\tblnumfour{0.4488}]} & \mstdthree{9.811}{7.990}\,{\scriptsize [\tblnumthree{5.828}]} & \mstdthree{15.173}{11.696}\,{\scriptsize [\tblnumthree{9.467}]} \\
$p=1$ & \pmstd{\tblnumfour{0.1398}}{\tblnumfour{0.1339}}\,{\scriptsize [\tblnumfour{0.1070}]} & \mstdthree{1.389}{4.884}\,{\scriptsize [\tblnumthree{-0.306}]} & \mstdthree{2.681}{6.189}\,{\scriptsize [\tblnumthree{-0.098}]} \\
$p=2$ & \pmstd{{\bfseries\tblnumfour{0.1350}}}{\tblnumfour{0.1091}}\,{\scriptsize \textbf{[\tblnumfour{0.1037}]}} & \mstdbestthree{0.584}{5.086}\,{\scriptsize \textbf{[\tblnumthree{-1.432}]}} & \mstdbestthree{1.465}{6.405}\,{\scriptsize [\tblnumthree{-1.383}]} \\
$p=3$ & \pmstd{\tblnumfour{0.1779}}{\tblnumfour{0.1667}}\,{\scriptsize [\tblnumfour{0.1215}]} & \mstdthree{0.713}{4.996}\,{\scriptsize [\tblnumthree{-0.819}]} & \mstdthree{1.649}{6.420}\,{\scriptsize [\tblnumthree{-1.138}]} \\
$p=4$ & \pmstd{\tblnumfour{0.1617}}{\tblnumfour{0.1567}}\,{\scriptsize [\tblnumfour{0.1078}]} & \mstdthree{0.876}{5.740}\,{\scriptsize [\tblnumthree{-1.237}]} & \mstdthree{1.872}{7.662}\,{\scriptsize \textbf{[\tblnumthree{-1.641}]}} \\
\bottomrule
\end{tabular}
\end{table}
\section{Implementation details}\label{sec:method-details}

\subsection{Network output and domain of the mean map}\label{sec:mean-map-domain}

For a full, regular exponential family with positive carrier and bounded sufficient statistic, under minimality \(A_{x,t}\) is strictly convex.
Hence, \(m_{x,t}(\eta)=\nabla_\eta A_{x,t}\) is one-to-one and maps the natural-parameter space onto the relative interior of the convex hull of feasible conditional velocities.
Thus, \(m^{-1}_{x,t}(v)\) exists when \(v\) lies in that interior.
At finite interior parameters, \(V_{x,t}=\nabla_\eta^2 A_{x,t}\) is positive definite under minimality.
If \(v\) approaches the boundary, some covariance eigenvalues can tend to zero, hence \(V^{-1}_{x,t}\) does have the potential to become ill-conditioned or diverge in the limit.
We claim our notion of support-awareness wherever the conditional velocity field is well-defined. 
This is the same prerequisite already required by RFM for the regression target,
and by RG-VFM in the endpoint-to-velocity decoder---both assume that couplings with non-unique probability paths have measure zero.
That is to say, we do not introduce branch dependence at cut loci as an additional assumption or limitation of our method.
Our endpoint likelihood induces density only on velocities produced by the conditional map
from endpoints to velocities, relative to a fixed choice of branch.

In implementation, the network predicts an unconstrained velocity
\(\vt(x_t)\in T_{x_t}\mathcal{M}\).
We evaluate the surrogate loss only where the mean map is invertible.
Outside this region, we use a barrier-style extension obtained by clamping the mean-parameter argument before table inversion.
This does not alter the population minimizer provided it lies in the interior,
because the practical loss agrees with the surrogate loss on that interior and is larger outside it with large penalty gradients pointing inwards.
On tori we clamp per coordinate while on spheres we clamp the norm.
Across the selected checkpoints in our reported experiments, the fraction of predictions for which the barrier remained active was zero.
Notwithstanding potentially large gradients near the boundary of the mean-parameter domain, we do not observe any stability issues in practice
and did not require gradient clipping, warmup, or other special techniques to train our method.
We scale our loss to be on the same scale as the RFM loss
by omitting the factor of \(\frac12\).

\subsection{Training algorithms}\label{sec:training-algorithms}

In \cref{fig:rfm-vs-ours-algorithm}, we compare our method and RFM
on simple geometries in algorithmic terms.

\begin{figure*}[!htbp]
\centering

\begin{minipage}[t]{0.485\textwidth}
\vspace{0pt}
\hrule
\vspace{0.35em}
{\textbf{Algorithm 1} Riemannian Flow Matching}
\vspace{0.35em}
\hrule
\vspace{0.4em}

\begin{algorithmic}
\Require base distribution $p_0(x_0)$, target distribution $p_1(x_1)$
\State
\State
\State Initialize parameters $\theta$ of $\vt(x)$
\While{not converged}
  \State sample time $t \sim \mathcal U(0,1)$
  \State sample training example $x_1 \sim p_1$
  \State sample noise $x_0 \sim p_0$
  \State $x_t \gets \exp_{x_0}\!\left(t\,\log_{x_0}(x_1)\right)$
  \State $\Delta \gets \vt(x_t)-\uc(x_t\mid x_1)$
  \State \diff{$\ell(\theta) \gets \|\Delta\|_g^2$}
  \State $\theta \gets \operatorname{optimizer\_step}(\ell(\theta))$
\EndWhile
\end{algorithmic}

\vspace{0.35em}
\hrule
\end{minipage}
\hfill
\begin{minipage}[t]{0.485\textwidth}
\vspace{0pt}
\hrule
\vspace{0.35em}
{\textbf{Algorithm 2} Our method}
\vspace{0.35em}
\hrule
\vspace{0.4em}

\begin{algorithmic}
\Require base distribution $p_0(x_0)$, target distribution $p_1(x_1)$
\State
\State \diff{Pretabulate components of $V_{x,t}(v)$}
\State Initialize parameters $\theta$ of $\vt(x)$
\While{not converged}
  \State sample time $t \sim \mathcal U(0,1)$
  \State sample training example $x_1 \sim p_1$
  \State sample noise $x_0 \sim p_0$
  \State $x_t \gets \exp_{x_0}\!\left(t\,\log_{x_0}(x_1)\right)$
  \State $\Delta \gets \vt(x_t)-\uc(x_t\mid x_1)$
  \State \diff{$\ell(\theta) \gets \langle\Delta,\,\sg(V_{x,t}(\vt(x_t)))^{-1}\,\Delta\rangle_g$}
  \State $\theta \gets \operatorname{optimizer\_step}(\ell(\theta))$
\EndWhile
\end{algorithmic}

\vspace{0.35em}
\hrule
\end{minipage}

\caption{
Side-by-side training procedures for RFM and our method on simple geometries.
The figure highlights the differing steps in our method in colour.
We omit the factor of \(\frac{1}{2}\) from our loss to keep it on the same scale as RFM.}\label{fig:rfm-vs-ours-algorithm}
\end{figure*}

\subsection{Implementation on less symmetric manifolds}\label{sec:less-symmetric}

In principle, the usual steps of our method work on less symmetric manifolds---compute the mean and covariance of the conditional velocity under the endpoint family, numerically invert the mean map, and cache the resulting functions.
Instead of quadrature, Monte Carlo could be feasible; taking advantage of the exponential family structure lets the quantities be expressed as expectations.
Computing them offline and interpolating seems the most plausible route.
Neural or piecewise optimization is plausibly most useful in either inverting the mean map, or selecting good choices of nodes and weights for quadrature.
Generally, if there is a particular high-dimensional manifold supporting several datasets of interest,
it would be prudent to learn the function approximations once,
reusing them across datasets.

\section{Specialization to common geometries}\label{sec:geometric-specializations}

The surrogate loss of our method is preconditioned by
the covariance $V_{x,t}(v)$ of the conditional velocity.
For the purposes of this section,
let $v=\vt(x_t)$ and
$\eta=m_{x,t}^{-1}(v)$ so that
$V_{x,t}(v)=\nabla_\eta^2 A_{x,t}(\eta)$.
It will also be particularly convenient to
adopt the definition
\begin{equation}
  \Delta=v-\uc(x_t\mid x_1).
\end{equation}
The range of geodesic velocities that are feasible and the
Riemannian volume measure determine the log-partition
function.
In each case considered here, we find that symmetries of the manifold
reduce the covariance to a small number of scalar functions.
This makes offline pretabulation tractable.

\subsection{Flat torus \texorpdfstring{$\mathbb T^d$}{Td}}\label{sec:torus-specialization}

As each coordinate of  $\mathbb T^d$ satisfies
${[\log_{x_t}(x_1)]}_i\in[-\pi,\pi)$,
we have ${[\uc]}_i\in\left[-\tfrac\pi{1-t},\tfrac\pi{1-t}\right)$.
Since the metric is flat,
the map from endpoints to velocities has constant
Jacobian.
Hence the likelihood is a product of
$\mathcal N(\eta_i,1)$ distributions truncated to
$\left[-\tfrac\pi{1-t},\tfrac\pi{1-t}\right)$.
Now, let $z_{\pm,i}=\eta_i\pm \tfrac\pi{1-t}$.
With $\phi$ and $\Phi$ as the standard normal density
and CDF respectively,
the log-partition function separates across coordinates into
\begin{align}
A_{x,t}(\eta)
&=\sum_{i=1}^d\log\int_{-\pi/(1-t)}^{\pi/(1-t)}
e^{\eta_i u-u^2/2}\,\mathrm du
+\text{terms independent of }\eta\\
&=
\sum_{i=1}^d\left[
  \frac{\eta_i^2}{2}
  +\log\left(\Phi(z_{+,i})-\Phi(z_{-,i})\right)
  \right]
  +\text{terms independent of }\eta.
\end{align}
The mean velocity is
\begin{align}
v_i&=\partial_{\eta_i}A_{x,t}(\eta)\\
&=\eta_i+
\frac{\phi(z_{+,i})-\phi(z_{-,i})}
{\Phi(z_{+,i})-\Phi(z_{-,i})}.
\end{align}
The covariance is diagonal, with
\begin{align}
{[V_{x,t}(v)]}_{ii}
&=\partial_{\eta_i}^2 A_{x,t}(\eta)\\
&=1+
\frac{z_{-,i}\phi(z_{-,i})-z_{+,i}\phi(z_{+,i})}
     {\Phi(z_{+,i})-\Phi(z_{-,i})}
-{(v_i-\eta_i)}^2.
\end{align}
Hence we need just one scalar coefficient per coordinate to form the surrogate loss,
\begin{equation}
\ell=\frac12\sum_{i=1}^d
\frac{\Delta_i^2}{\sg({[V_{x,t}(v)]}_{ii})}.
\end{equation}
As $t\to1$, the interval $\left[-\tfrac\pi{1-t},\tfrac\pi{1-t}\right)$ expands to all of
$\mathbb R$, meaning ${[V_{x,t}(v)]}_{ii}$ tends to one.
As a result, the loss approaches $\frac12\|\Delta\|_g^2$,
which is that of RFM up to a factor of $\tfrac12$.

\subsection{Complete simply connected manifolds of constant curvature}\label{sec:constant-curvature}

Here, let $\mathcal M$ be a complete simply connected
$d$-dimensional manifold of constant sectional curvature
$\kappa$ where $d\ge2$.
Rotating around $x_t$ preserves both distance and volume.
Hence the log-partition function depends on $\eta$ only
via $s=\|\eta\|_g$.
To make this emphasis, we denote $A_t=A_t(s)$.
The prediction $v$ of the velocity is parallel to $\eta$ and
$\|v\|_g=A_t'(s)$.
As \(\nabla_\eta A_{x,t}(\eta)=A_t'(s)\frac{\eta}{s}\), the covariance is
\begin{align}
V_{x,t}(v)
&=\nabla_\eta\left(A_t'(s)\frac{\eta}{s}\right)\\
&=A_t''(s)\frac{\eta\otimes\eta}{s^2}
+\frac{A_t'(s)}s\left(I-\frac{\eta\otimes\eta}{s^2}\right)\\
&=
\frac{A_t'(s)}{s}I+
\left(A_t''(s)-\frac{A_t'(s)}{s}\right)
\frac{v\otimes v}{\|v\|_g^2}.\label{eq:h-curvature-decomp}
\end{align}
At $v=0$, the continuous extension is
$V_{x,t}(0)=A_t''(0)I$, directional decomposition is unneeded.
Now, let $\Delta_\parallel$ and $\Delta_\perp$ be the
components of $\Delta$ which are, respectively, parallel and perpendicular to $v$.
Then, the surrogate loss is
\begin{equation}\label{eq:ours-const-curv-loss}
\ell=\frac12\left[
\frac{\|\Delta_\parallel\|_g^2}{\sg(A_t''(s))}
+\frac{\|\Delta_\perp\|_g^2}{\sg\left(\frac{A_t'(s)}{s}\right)}
\right].
\end{equation}
We find that the loss weights errors parallel and perpendicular to
the predicted velocity differently.
As shown in \cref{fig:s2-vinv-decomposition},
the parallel component has greater weight than the perpendicular one on $\mathbb S^2$.
This reflects the intuition that, when curvature is positive,
geodesic convergence means that errors in the magnitude of the predicted velocity matter more than those in the direction.

\begin{figure}[t]
\centering
\begin{subfigure}[t]{0.49\linewidth}
  \centering
  \includegraphics[width=\linewidth]{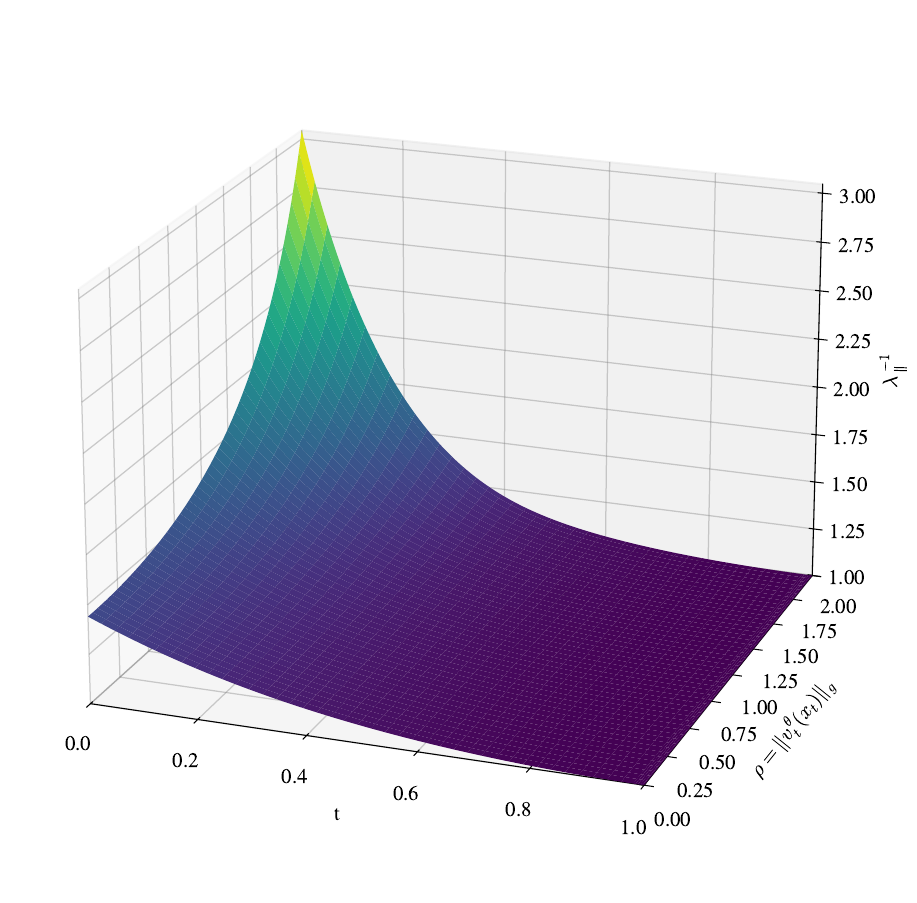}
  \caption{Weight parallel to $v$.}
\end{subfigure}\hfill
\begin{subfigure}[t]{0.49\linewidth}
  \centering
  \includegraphics[width=\linewidth]{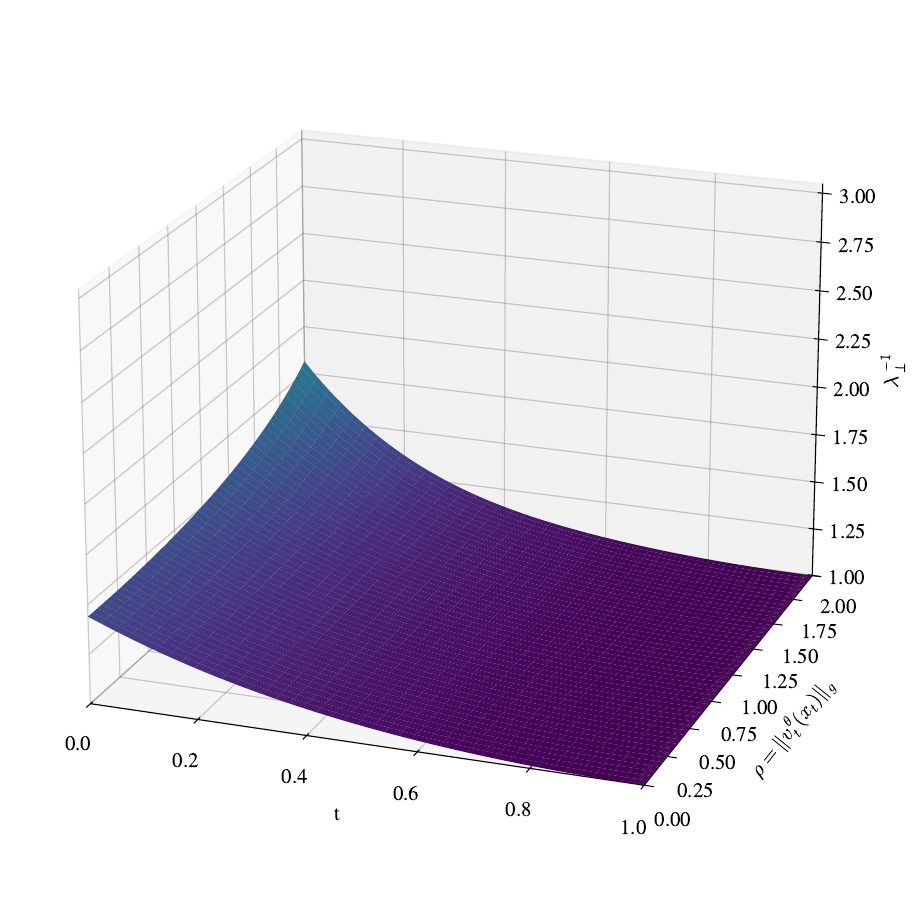}
  \caption{Weight perpendicular to $v$.}
\end{subfigure}
\caption{Weighting of the surrogate loss on $\mathbb S^2$.
The vertical axis has $\lambda_\parallel^{-1}=\tfrac{1}{A_t''(s)}$ and
$\lambda_\perp^{-1}=\tfrac{s}{A_t'(s)}$, respectively.
Errors parallel to the predicted velocity are given more
weight than those perpendicular, but both tend to
one as $t\to1$.}\label{fig:s2-vinv-decomposition}
\end{figure}

\paragraph{Radial integration}
We evaluate $A_t(s)$ in polar coordinates around
$x_t$.
Let $r=\tfrac{\dist_g(x_t,x_1)}{1-t}$ and
$\vartheta$ be the angle between $\eta$ and $\uc$.
The radial part of the Riemannian volume measure is
(up to a factor independent of $s$)
\begin{equation}
J_t(r)=
\begin{cases}
\left(\frac{\sin((1-t)\sqrt{\kappa}\,r)}{(1-t)\sqrt{\kappa}}\right)^{d-1},
&\kappa>0,\\
r^{d-1},
&\kappa=0,\\
\left(\frac{\sinh((1-t)\sqrt{-\kappa}\,r)}{(1-t)\sqrt{-\kappa}}\right)^{d-1},
&\kappa<0.
\end{cases}
\end{equation}
For a conditional velocity vector of magnitude $r$ making an angle
$\vartheta$ with $\eta$, the exponent in the likelihood is
$-\tfrac12r^2+sr\cos\vartheta$.
Over the sphere of possible directions, the area between
angles $\vartheta$ and $\vartheta+\mathrm d\vartheta$ is
proportional to $\sin^{d-2}\vartheta\,\mathrm d\vartheta$.
Together with the radial volume factor $J_t(r)$,
this means the log-partition function reduces to
\begin{equation}\label{eq:radial-log-partition}
A_t(s)=\log\int_0^{\frac{\operatorname{diameter}(\mathcal M)}{1-t}}
e^{-\tfrac12r^2}J_t(r)
\int_0^\pi e^{sr\cos\vartheta}\sin^{d-2}\vartheta\,\mathrm d\vartheta
\,\mathrm dr+\text{terms independent of }s.
\end{equation}

\paragraph{Computation of the derivatives}
Employing the modified Bessel function $I_\nu$ of the first kind,
we write the integral over $\vartheta$ as
\begin{equation}
\int_0^\pi e^{sr\cos\vartheta}\sin^{d-2}\vartheta\,\mathrm d\vartheta
=\sqrt{\pi}\,\Gamma\left(\frac{d-1}{2}\right)
\left(\frac{2}{sr}\right)^{\frac d2-1}
I_{\frac d2-1}(sr).
\end{equation}
For $\nu=\tfrac d2-1$, the derivatives of the Bessel term satisfy the identities
\begin{align}
\frac{\mathrm d}{\mathrm dz}\left(z^{-\nu}I_\nu(z)\right)
&=z^{-\nu}I_{\nu+1}(z),\\
\frac{\mathrm d^2}{\mathrm dz^2}\left(z^{-\nu}I_\nu(z)\right)
&=z^{-\nu}\left(I_{\nu+2}(z)+\frac{I_{\nu+1}(z)}{z}\right).
\end{align}
Differentiating \cref{eq:radial-log-partition} under the integral sign
enables the calculation of both $A_t'(s)$ and
$A_t''(s)$ via quadrature in just one dimension.
We tabulate these and the inverse of $A_t'(s)$
before the initiation of the training loop.
For numerical stability, we use exponentially
scaled Bessel functions for large arguments and
a series expansion near zero. We accumulate the
quadrature sums via log-sum-exp.

\paragraph{Euclidean limit}
As $t\to1$, the radial volume factor converges as
$J_t(r)\to r^{d-1}$.
Hence the integral tends to
its Euclidean form for which
\begin{equation}
A_t(s)\longrightarrow
\frac{s^2}{2}+\text{terms independent of }s.
\end{equation}
As a result, $A_t'(s)\to s$ and $A_t''(s)\to1$.
Both weights in the loss tend to one, meaning
$\ell\to\tfrac12\|\Delta\|_g^2$.
This, again, recovers the loss of RFM
up to a factor of $\tfrac12$.

\subsection{Rotation groups and products \texorpdfstring{$\mathrm{SO}(3)^n$}{SO(3)^n}}\label{sec:so3-specialization}

On $\mathrm{SO}(3)$, we use the canonical bi-invariant metric where
the distance from the identity to a rotation is given by its angle.
Let a tangent vector at $R$ be denoted $R\widehat u$,
where $\widehat u\,w=u\times w$.
Identifying $R\widehat u\leftrightarrow u\in\mathbb R^3$
is isometric as
\begin{equation}
g_R(R\widehat u,R\widehat w)=u^\top w.
\end{equation}
Under this metric, $\mathrm{SO}(3)$ has sectional curvature
$\kappa=\tfrac14$.
The manifold is of constant curvature, albeit not simply connected.
The angle of a shortest rotation is $\pi$ at most:
again, the conditional velocity satisfies
$\|\uc\|_g\le\tfrac\pi{1-t}$.

\paragraph{Radial integration}
As before, rotational symmetry means the log-partition
function depends on $\eta$ only via $s=\|\eta\|_g$.
Let $A_t=A_t(s)$ denote this function (for a single rotation)
and let $r$ be the magnitude of the conditional velocity,
corresponding to a rotation angle of $(1-t)r$.
The radial part of the volume measure is proportional to
$\sin^2(\tfrac{1-t}{2}r)$.
In order to integrate over directions, let $\vartheta$ be the angle
between $\eta$ and the conditional velocity.
In three dimensions, the average over directions is
\begin{align}
\frac12\int_0^\pi e^{sr\cos\vartheta}
\sin\vartheta\,\mathrm d\vartheta
&=\frac12\int_{-1}^1 e^{srz}\,\mathrm dz\\
&=\frac{\sinh(sr)}{sr}.
\end{align}
These together reduce the log-partition function to
\begin{equation}
A_t(s)=\log\int_0^{\frac{\pi}{1-t}}
e^{-r^2/2}\frac{\sinh(sr)}{sr}
\sin^2\left(\frac{1-t}{2}r\right)\,\mathrm dr
+\text{terms independent of }s.
\end{equation}
We obtain the derivatives required for evaluation of the covariance via
\begin{align}
\frac{\mathrm d}{\mathrm dz}\left(\frac{\sinh z}{z}\right)
&=\frac{z\cosh z-\sinh z}{z^2},\\
\frac{\mathrm d^2}{\mathrm dz^2}\left(\frac{\sinh z}{z}\right)
&=\frac{(z^2+2)\sinh z-2z\cosh z}{z^3}.
\end{align}
Differentiating (under the integral sign) lets us evaluate
$A_t'(s)$ and $A_t''(s)$ via quadrature in one dimension.
For numerical stability, we use series expansions near zero,
scaled hyperbolic functions for large arguments, and
log-sum-exp for the quadrature sums.

\paragraph{Product structure}
Here, $n$ is the number of rotation factors, so the intrinsic dimension is $d=3n$.
On $\mathrm{SO}(3)^n$, the metric is the sum of the metrics
of the individual rotation groups.
As both the likelihood and the volume measure factorize
across rotations,
\begin{equation}
A_{R,t}(\eta)=\sum_{j=1}^n A_t(s_j),
\end{equation}
where $s_j=\|\eta^{(j)}\|_g$ is the magnitude of the $j$th rotation factor.
The covariance has a $3\times3$ block per rotation.
Each of these takes the form in \cref{eq:h-curvature-decomp},
where $A_t'(s_j)=\|v^{(j)}\|_g$ determines the $s_j$.
Let $\Delta_\parallel^{(j)}$ and $\Delta_\perp^{(j)}$
be the components of the error in the $j$th factor parallel
and perpendicular to $v^{(j)}$, respectively.
Then, by \cref{eq:ours-const-curv-loss}, the surrogate loss is
\begin{equation}
\ell=\frac12\sum_{j=1}^n\left[
\frac{\|\Delta_\parallel^{(j)}\|_g^2}
{\sg(A_t''(s_j))}
+\frac{\|\Delta_\perp^{(j)}\|_g^2}{\sg\left(\frac{A_t'(s_j)}{s_j}\right)}
\right].
\end{equation}
The network can use all $n$ rotations to predict each
velocity because it is only the covariance that separates into factors.
As $t\to1$, both weights tend to one.
This again recovers the RFM loss up to a factor of $\tfrac12$.
The product structure and identical factors mean
we pretabulate the same two scalar functions, reusing them
across all rotations, including the 19 factors in
the human pose experiment.

\section{Experimental methodology and evaluation}\label{sec:experiment-protocols}

\subsection{Evaluation framework}\label{sec:eval-framework}
Accurate measurement of generative performance on manifolds is important but many tools from the Euclidean setting are not immediately to hand.
Here we discuss validity of common evaluation metrics and lay out our choices.

\paragraph{Wasserstein distances}
Wasserstein distances are a natural way to measure generative performance \citep{villani2009optimal}
but are costly to compute.
Sliced Wasserstein mitigates this in Euclidean space by reducing optimal transport via projection,
but generalizations require Radon transform injectivity \citep{kolouri2019gsw}.
This is automatic in Euclidean space but manifolds require a choice of projection family.
Attempts to apply sliced Wasserstein to manifolds thus introduce additional geometric structure,
for example, global projections on Cartan--Hadamard manifolds \citep{bonet2025sliced},
where geodesic structure is well behaved, or spectral slicing constructions based on Laplace--Beltrami eigenfunctions \citep{rustamov2023intrinsic}.
Their metric guarantees depend on the chosen projection family being separating, and practical finite implementations are truncated approximations.

\paragraph{Kernel methods}
Methods based on kernels can work well, but their validity depends on RKHS conditions.
The energy distance \citep[ED;][]{szekely2013energy}
is a nonparametric discrepancy requiring the space to have strong negative type to be a valid metric.
On \(\mathbb{T}^1\),
geodesic distance has negative type but not strong negative type \citep{lyons2020spheres}.
We can convert to strong negative type by square root \citep{lyons2013distance},
setting \(d_{\mathrm{ED}}(x,y) = \sqrt{\dist_g(x,y)}\) in the ED formula.
\citet{kokkendorff2002geometry} proves compact Riemannian manifolds of negative type and dimension \(\geq 2\) are simply connected
so geodesic distance on \(\mathbb{T}^d\) is not of negative type and the same trick does not work for \(d\geq 2\).
Maximum mean discrepancy \citep[MMD;][]{gretton2012kernel} is a related discrepancy defined often with a bandwidth-parametrized Gaussian kernel.
\citet{feragen2015geodesic} show the Riemannian generalization of this kernel is not positive definite on manifolds,
suggesting the Laplace kernel specifically for \(\mathbb{S}^d\) (which includes \(\mathbb{T}^1\)).
In our experiments, we use the median heuristic to set the Laplace kernel bandwidth.

\paragraph{Our choices}
Based on the above, we report Wasserstein distances
for our analytic test case, where the data distribution is an equally weighted mixture of two Diracs, simplifying optimal transport
(alongside the pedagogical reasons, we chose the singular form of the distributions precisely so that Wasserstein is feasible),
alongside the unbiased forms of energy distance with the square-root trick and MMD with the Riemannian Laplace kernel.
For the real-world geospatial benchmarks,
we found the small test-set size commonly resulted in negative values for the unbiased kernel methods despite their population validity.
As these are well-established benchmarks in the literature with many prior reported results,
we think it reasonable to only report NLL\@.
However, this means we cannot compare to RG-VFM as its NLLs are uninformative;
see \cref{sec:rgvfm-nll} for further discussion.
As \citet{zaghen2026rgvfm} do not report NLLs in their experiments, we were unable to validate our replication directly according to this metric.

\subsection{A Fourier-kernel MMD for high-dimensional tori}\label{sec:fourier-mmd}
For high-dimensional product tori,
we report NLL so as to compare to prior work,
but felt it was necessary to supplement this with a more robust metric,
one reason being for a fairer and clearer comparison to RG-VFM\@.
However, the Riemannian energy distance and Laplace-kernel MMD are neither metrics nor pseudometrics for \(d>1\).
Here, we construct a Fourier MMD pseudometric specifically designed for the wrapped Gaussian target of our experiment.
This uses the known closed form of the characteristic function to calculate the RKHS representation without estimation, thus eliminating this source of experimental uncertainty.

We first note that a translation-invariant kernel on \(\mathbb{T}^d\)
is characteristic if and only if its Fourier coefficients are positive on the whole Fourier spectrum \(\mathbb{Z}^d\) \citep{fukumizu2008characteristic,sriperumbudur2010hilbert}.
We use the Fourier kernel
\begin{equation}
  k(x,y)=\sum_{m\in\mathbb{Z}^d} a_m e^{\mathrm{i}m^\top(x-y)},
\end{equation}
with \(a_m\ge 0\), \(\sum_m a_m<\infty\), and wavevector \(m\)
for which we choose \(a_m=\exp(-\lmmd\|m\|_2^2)\) with parameter \(\lmmd\).
The associated MMD is the RKHS distance between mean embeddings \citep{gretton2012kernel},
\begin{equation}\operatorname{MMD}^2 = \sum_{m\in\mathbb{Z}^d} a_m \left|\hat{P}(m)-\hat{Q}(m)\right|^2,\end{equation}
where \(\hat{Q}(m)=\frac{1}{N}\sum_{i=1}^N e^{\mathrm{i}m^\top x_i}\) is the empirical Fourier embedding of generated samples \(x_1, x_2, \ldots, x_N\sim \hat{p}_1\).
In the case of wrapped Gaussian data \(x_1\sim \mathcal{WN}(\mu, \Sigma)\),
we use the closed-form expression for the Fourier embedding of the data distribution,
\(\hat{P}(m)=\E_{p_1}[e^{\mathrm{i}m^\top X}]=\exp\left(\mathrm{i}m^\top\mu-\frac{1}{2}m^\top\Sigma m\right)\).
In practice, we truncate to a finite set of frequencies \(\{m:m\in\Omega\}\) for some \(\Omega\subseteq\mathbb{Z}^d\),
giving the empirical MMD\(^2\) V-statistic at parameter \(\lmmd\) as
\begin{equation}
\widehat{\operatorname{MMD}}^2_{\lmmd} = \sum_{m\in\Omega} e^{-\lmmd\|m\|_2^2} \left|e^{\mathrm{i}m^\top\mu-\frac{1}{2}m^\top\Sigma m}-\frac{1}{N}\sum_{i=1}^N e^{\mathrm{i}m^\top x_i}\right|^2
\end{equation}
We use the wavevector set
\(\Omega = \{ we_i : i \in\{1,\ldots,d\}; w \in \{-10, -9, \ldots, 9, 10\}\}\setminus \{0\}\)
with $e_i$ the $i$th standard basis vector,
and average over a grid centred at \(\lmmd=\frac12\sigma^2=0.02\), namely \(\lmmd\in\Lmmd=\{0.005,0.01,0.02,0.04,0.08\}\)
to get a more robust estimate of performance giving
\begin{equation}
\widehat{\operatorname{MMD}}^2 = \frac{1}{|\Lmmd|}\sum_{\lmmd\in\Lmmd}\frac{1}{\alpha(\lmmd)}\widehat{\operatorname{MMD}}^2_{\lmmd},
\end{equation}
where \(\alpha(\lmmd)=\sum_{m\in\Omega} e^{-\lmmd\|m\|_2^2}\) normalizes for total Fourier weight.
The result is a scalable finite-resolution (squared) pseudometric, although it only detects discrepancies along coordinate marginals,
which may miss pairwise interactions at high dimensions.
Consideration of all pairs of basis vectors is infeasible at 400 dimensions and the data distribution is in any case isotropic.
At low dimensions we found that pairs and higher-order interactions had negligible contribution to the MMD,
especially when compared to the bandwidth \(\lmmd\).

\subsection{Benchmark protocols}\label{sec:benchmark-protocols}
The analytic test case, torus sweep, and geospatial experiments use a single NVIDIA A100-SXM GPU with \SI{80}{\giga\byte} of memory.
The efficiency studies use the hardware specified in \cref{sec:perf}.
We follow the RFM setup published by \citet{chen2024rfm} and extend its codebase to support our method and additional evaluation metrics.
This splits the data into \SI{80}{\percent} train, \SI{10}{\percent} validation, and \SI{10}{\percent} test, giving us the same data splits as in the RFM paper.
We use seed values 0--29, 0--4, and 0--4 for our training runs on the analytic test case, high-dimensional torus, and geospatial datasets respectively.
Our backbone is a standard multilayer perceptron with time concatenated to the input.
This employs Swish \citep{ramachandran2018searching} with learnable parameter and Adam with an exponential moving average \citep{polyak1992acceleration} (decay 0.999).
For synthetic datasets, we train for \num{50000} iterations with batch size 512 and 3 hidden layers of width 512.
For the geospatial datasets, we train for \num{1000000} iterations with batch size and hidden width following \citet{chen2024rfm}.
The analytic test case uses dopri5 as the ODE solver with tolerance \(\num{1e-5}\)
while the high-dimensional torus uses \num{1000} Euler steps with per-step projection for scalability following \citet{chen2024rfm}.
In the latter case it was not feasible to use dopri5 (even if desired) as RG-VFM did not complete in a reasonable time
beyond low dimensions.
At these low dimensions we observe that the Euler solver appears most favourable for RG-VFM; this is also the solver used by \citet{zaghen2026rgvfm}.
Evaluation on the geospatial datasets uses dopri5 with tolerance \(\num{1e-7}\) also following \citet{chen2024rfm}.
We modify the RFM codebase to \texttt{torch.compile} without graph breaks,
training in default mode for these experiments.
The end-to-end runtime benchmark in \cref{sec:training-throughput} runs with compilation off to isolate training-loop and pretabulation cost under identical settings; the additional profiling study reports both eager and compiled execution.
We are aware that RG-VFM is sensitive to solver choice and tolerance, and we investigate a variety of solvers and tolerances in \cref{sec:rgvfm-nll}.

\subsection{Human pose training and evaluation}\label{sec:pose-details}
We use human pose data expressed in \({\mathrm{SO}(3)}^{19}\), following \citet{dunkel2024huproso3}, who train their 
normalizing flow model, HuProSO3, on a partition of the AMASS dataset collection \citep{mahmood2019amass}, namely:
CMU \citep{AMASS_CMU},
PosePrior \citep{AMASS_PosePrior},
TotalCapture \citep{AMASS_TotalCapture},
EyesJapanDataset \citep{AMASS_EyesJapanDataset},
KIT and EKUT \citep{AMASS_KIT-CNRS-EKUT-WEIZMANN,AMASS_KIT-CNRS-EKUT-WEIZMANN-2,AMASS_KIT-CNRS-EKUT-WEIZMANN-3},
BMLrub \citep{AMASS_BMLrub},
TCDHands \citep{AMASS_TCDHands}, and
ACCAD \citep{AMASS_ACCAD}.
Where \citet{dunkel2024huproso3} is concerned with cross-dataset performance---the unconditional HuProSO3 is trained on the whole of the above partition and tested on an aggregate of remaining datasets from AMASS---we are concerned with in-distribution performance.
We train RFM, RG-VFM, and our method with an 80/10/10 training/validation/test split of this partition, resulting in \num{11823971} training samples.
Using an MLP of 6 layers and width \num{1024}, we train for \num{150000} steps.

We report NLL with respect to the Haar measure.
For both energy distance and Gaussian MMD, we evaluate via the chordal distance (i.e., implicitly embedding into \(\mathbb{R}^{171}\)).
Euclidean embedding is injective and hence is a space where these metrics are valid.
The Gaussian MMD metric uses the median heuristic for the bandwidth, calibrated uniformly using the test set.

\section{Computational performance}\label{sec:perf}

\subsection{Adaptive sampling time on \texorpdfstring{$\mathrm{SO}(3)^n$}{SO(3)n}}\label{sec:pose-sampling}

Sampling for our human pose experimental results in \cref{tab:pose-results} used Euler integration with \num{1000} steps (following the original RG-VFM implementation and for the reasons we outline in \cref{sec:rgvfm-nll}).
Here, we further provide inference NFEs and sampling timing under adaptive dopri5 (\cref{tab:pose-sampling}).
Our method only differs from RFM in the backward pass, while the forward pass is through an MLP of the same architecture.
Therefore, under Euler sampling they have the same sampling cost, while under adaptive sampling the sampling time depends on the neural approximation learned.

In many cases, flow matching under independent coupling has optimal solutions exhibiting significant spatiotemporal variation (motivating ideas around rectification and choosing better couplings).
For methods possessing compatibility, we posit that their relative ordering measured by adaptive solver NFE depends on the true marginal field induced by the data.
Notably, ODE solving at a tolerance of \(10^{-5}\) for our method and RFM is comparable to the forward pass of HuProSO3's autoregressive normalizing flow.
Although HuProSO3 does not require numerical integration, it has a relatively heavier neural architecture and its authors note they have not optimized all potential parallelism \citep[][\S B.1]{dunkel2024huproso3}.

\begin{table}[]
\centering
\small
\caption{Sampling \num{20000} poses.
Mean\(\pm\)std.
Flow matching models use adaptive dopri5; HuProSO3 uses its autoregressive forward pass.}\label{tab:pose-sampling}
\begin{tabular}{lccc}
\toprule
 & & \textbf{Time} \(\downarrow\) & \textbf{NFE} \(\downarrow\) \\
Method & Tolerance & {\scriptsize (s)} & \\
\midrule
HuProSO3 & --- & \pmstd{$5.465$}{0.025} & --- \\
RFM & $10^{-5}$ & \pmstd{$4.153$}{0.132} & \pmstd{$550.4$}{17.3} \\
RG-VFM & $10^{-5}$ & \pmstd{$57.456$}{1.581} & \pmstd{$\num{1212.0}$}{30.2} \\
Our method & $10^{-5}$ & \pmstd{$4.214$}{0.047} & \pmstd{$558.8$}{6.6} \\
RFM & $10^{-7}$ & \pmstd{$9.479$}{0.279} & \pmstd{$\num{1254.8}$}{37.1} \\
RG-VFM & $10^{-7}$ & \pmstd{$118.052$}{17.562} & \pmstd{$\num{2584.0}$}{385.6} \\
Our method & $10^{-7}$ & \pmstd{$10.127$}{0.355} & \pmstd{$\num{1341.2}$}{46.6} \\
\bottomrule
\end{tabular}
\end{table}

\subsection{End-to-end training throughput in the absence of compilation}\label{sec:training-throughput}
We additionally benchmark end-to-end runtime in the same training pipeline and report iterations per second (it/s).
The results are in \cref{tab:runtime-benchmark-its}.
The timing is measured as wall clock for the whole training loop from start to finish for each run.
Across methods, we keep the model architecture and execution setup fixed and change only the training loss.
The hardware is an NVIDIA GeForce RTX 3090 (\SI{24}{\giga\byte}) with an AMD Ryzen 9 5950X 16-Core CPU\@.
All methods use the same architecture and run configuration: a three-layer MLP with width 512, batch size 512,
with \num{1000} training steps per run and 5 repetitions with \texttt{torch.compile} turned off.
We disable the evaluation of validation NLL during this benchmark; runtime differences are a function of both pretabulation and lookup / training-loop iteration cost.
The throughput penalty of our method is largest at low dimensions and shrinks at high dimensions.
This is consistent with the \(V_{x,t}\) lookup being a small per-sample cost
relative to the initial pretabulation, reducing at high dimensions
as other costs increase.

\begin{table}[]
\centering
\small
\caption{Training runtime estimates in it/s\(\uparrow\).
Mean\(\pm\)std, 5 runs.
\texttt{torch.compile} disabled.
We use the same full training loop for all methods and only change the loss.
Thus this data does not reflect the high inference time of RG-VFM.}\label{tab:runtime-benchmark-its}
\begin{tabular}{lcccc}
\toprule
Method &\(\mathbb{T}^{2}\) &\(\mathbb{T}^{500}\) &\(\mathbb{S}^{2}\) &\(\mathbb{S}^{500}\) \\
\midrule
RFM &\mstdbest{48.33}{1.11} &\mstdbest{15.32}{0.08} &\mstdbest{46.35}{0.65} &\mstdbest{14.62}{0.04} \\
RG-VFM &\mstd{46.44}{0.19} &\mstd{14.82}{0.02} &\mstd{44.50}{0.29} &\mstd{14.25}{0.03} \\
Our method &\mstd{40.51}{0.81} &\mstd{14.58}{0.11} &\mstd{34.30}{0.50} &\mstd{12.98}{0.05} \\
\bottomrule
\end{tabular}
\end{table}

\subsection{Per-iteration time and memory profiling}\label{sec:iteration-profiling}

Here, in \cref{tab:iteration-profiling}, we share some profiling data for timing and memory usage for both \(\mathbb{T}^5\) and \(\mathbb{S}^2\)
in both eager and compiled execution.

Timing methodology:
\begin{itemize}
\item {We use eager execution and compilation via \texttt{torch.compile} for \(\mathbb{T}^5\) and \(\mathbb{S}^2\) (\texttt{wrapped\_torus} and \texttt{flood} configs respectively).}
\item {Total steps per run = 40 warm-up + \num{1000} steps.}
\item {Constructor timing via \texttt{time.perf\_counter()} around \texttt{ManifoldFMLitModule(cfg)} construction, following CUDA sync.}
\item { \texttt{torch.cuda.synchronize()} every timer batch end so timing measures completed GPU work (not async CPU dispatch).}
\end{itemize}

Memory methodology:
\begin{itemize}
\item {CUDA context first initialized with \texttt{torch.zeros(1, device="cuda:0")}; then allocator peak reset.}
\item {\texttt{Cpk}} = constructor allocator peak, i.e.\ \texttt{torch.cuda.max\_memory\_allocated()} at \texttt{on\_train\_start}.
\item {\texttt{Clive}} = live constructor allocator memory, i.e.\ \texttt{torch.cuda.memory\_allocated()}, at \texttt{on\_train\_start}.
\item {\texttt{Cdrv}} = constructor driver snapshot, i.e.\ \texttt{(total - free)} from \texttt{torch.cuda.mem\_get\_info()} at \texttt{on\_train\_start}.
\item {\texttt{Mpk}} = main-loop allocator peak, i.e.\ \texttt{torch.cuda.max\_memory\_allocated()} over the timed window after the reset following warm-up.
\item {\texttt{Mdrv}} = main-loop driver maximum, i.e.\ \texttt{total - min(free)} from \texttt{torch.cuda.mem\_get\_info()} over synchronized timed batch-end samples.
\end{itemize}
We give both allocator and driver metrics as the former are only exact at a byte level for live allocs tracked by PyTorch (they omit CUDA-graph pools and direct cuSOLVER/cuBLAS or other CUDA allocations, hence apparent lower memory usage under \texttt{torch.compile}). The latter also capture broader device occupancy.
\begin{table}[h]
\centering
\small
\caption{Constructor time, training time per iteration, and memory on \(\mathbb{T}^5\) and \(\mathbb{S}^2\).
Mean\(\pm\)std, 5 runs.
Memory identifiers as defined above.}\label{tab:iteration-profiling}
\resizebox{\linewidth}{!}{%
\begin{tabular}{@{}lllccccccc@{}}
\toprule
\multicolumn{1}{@{}l}{} & & & \multicolumn{2}{c}{Time} & \multicolumn{5}{c}{Memory} \\
\cmidrule(lr){4-5}\cmidrule(lr){6-10}
\multicolumn{1}{@{}l}{} &  &  & \textbf{Constructor} \(\downarrow\) & \textbf{Step} \(\downarrow\) & \textbf{\texttt{Cpk}} \(\downarrow\) & \textbf{\texttt{Clive}} \(\downarrow\) & \textbf{\texttt{Cdrv}} \(\downarrow\) & \textbf{\texttt{Mpk}} \(\downarrow\) & \textbf{\texttt{Mdrv}} \(\downarrow\) \\
\multicolumn{1}{@{}l}{} & Mode&Method & {\scriptsize (s)} & {\scriptsize (ms)} & {\scriptsize (MiB)} & {\scriptsize (MiB)} & {\scriptsize (MiB)} & {\scriptsize (MiB)} & {\scriptsize (MiB)} \\
\midrule
\multirow{6}{*}{\(\mathbb{T}^5\)} & Eager & RFM & \pmstd{$0.0146$}{0.0015} & \mstd{3.7817}{0.1360} & \tblnum{3.8022} & \tblnum{3.8013} & \tblnum{418.1875} & $54.2061$ & \tblnum{492.1875} \\
& Eager & RG-VFM & \pmstd{$0.0155$}{0.0014} & \mstd{3.9113}{0.0968} & \tblnum{3.8022} & \tblnum{3.8013} & \tblnum{418.1875} & $54.2061$ & \tblnum{492.1875} \\
& Eager & Ours & \pmstd{$0.0324$}{0.0002} & \mstd{6.4990}{0.0708} & \tblnum{4.8408} & \tblnum{4.8398} & \tblnum{420.1875} & $55.2681$ & \tblnum{494.1875} \\
& Compiled & RFM & \pmstd{$0.7831$}{0.0070} & \mstd{2.2988}{0.0603} & \tblnum{21.0757} & \tblnum{11.9272} & \tblnum{508.1875} & \pmstd{$7.7166$}{0.0050} & \tblnum{492.1875} \\
& Compiled & RG-VFM & \pmstd{$0.7701$}{0.0077} & \mstd{2.2715}{0.0468} & \tblnum{21.0757} & \tblnum{11.9272} & \tblnum{508.1875} & \pmstd{$7.7209$}{0.0097} & \tblnum{492.1875} \\
& Compiled & Ours & \pmstd{$0.7915$}{0.0138} & \mstd{2.3185}{0.0560} & \tblnum{22.1143} & \tblnum{12.9658} & \tblnum{510.1875} & $8.7822$ & \tblnum{496.1875} \\
\cmidrule(l){2-10}
\multirow{6}{*}{\(\mathbb{S}^2\)} & Eager & RFM & \pmstd{$0.2095$}{0.0062} & \mstd{91.8191}{0.4800} & \tblnum{166.3921} & \tblnum{166.3911} & \tblnum{582.1875} & \tblnum{4493.5200} & \tblnum{5112.1875} \\
& Eager & RG-VFM & \pmstd{$0.2062$}{0.0068} & \mstd{92.1325}{0.1315} & \tblnum{166.3921} & \tblnum{166.3911} & \tblnum{582.1875} & \tblnum{4493.5200} & \tblnum{5110.1875} \\
& Eager & Ours & \pmstd{$1.9158$}{0.0540} & \mstd{93.7218}{0.0784} & \tblnum{172.0327} & \tblnum{172.0317} & \tblnum{602.1875} & \tblnum{4499.1899} & \tblnum{5112.1875} \\
& Compiled & RFM & \pmstd{$0.8567$}{0.0120} & \mstd{65.9301}{0.3162} & \tblnum{2782.9800} & \tblnum{174.5171} & \tblnum{3268.1875} & $332.9990$ & \tblnum{3414.1875} \\
& Compiled & RG-VFM & \pmstd{$0.8604$}{0.0048} & \mstd{66.5230}{0.3778} & \tblnum{2782.9800} & \tblnum{174.5171} & \tblnum{3266.1875} & $332.9990$ & \tblnum{3412.1875} \\
& Compiled & Ours & \pmstd{$2.5414$}{0.0509} & \mstd{65.6264}{0.3559} & \tblnum{2788.6206} & \tblnum{180.1577} & \tblnum{3268.1875} & $338.6689$ & \tblnum{3436.1875} \\
\bottomrule
\end{tabular}%
}
\end{table}
For memory measurements, standard deviation is omitted where recorded values were identical across all five runs.
Measurements were taken on an AMD Ryzen Threadripper and NVIDIA RTX A6000.
The allocator memory overhead is just a few MiB.
We find it of particular note that compilation via \texttt{torch.compile} makes the runtime overhead per iteration \textit{near negligible}.

\subsection{Compute accounting}\label{sec:compute-accounting}
For the \(\mathbb{T}^1\) experiment, the 30 seeds complete in under \SI{2}{\hour} across all three trained methods,
as each run finishes in under \SI{1}{\minute}.
On high-dimensional product tori, training is essentially independent of the dimension, at \(\sim\)\SI{1}{\minute}/run, but the
final test/NLL evaluation scales with \(d\), pushing per-run wall time from a few minutes at low \(d\) to
around \SI{1}{\hour} at \(d=400\); the entire sweep over 5 seeds and 3 methods across all reported dimensions
\(d\in\{1,2,5,10,20,50,100,200,400\}\) consumes roughly \SI{30}{\hour} of A100 time in total.
On the geospatial datasets, a single run of training and evaluation completes in under \SI{5}{\hour},
with the sweep for one method of 5 seeds on one dataset taking under \SI{24}{\hour}; aggregated across the four datasets and the two methods we train ourselves (our method and our reproduction of RFM), this totals roughly \SI{200}{\hour} of A100 time.
The end-to-end training benchmark of \cref{sec:training-throughput} runs separately on the RTX 3090, contributing a negligible extra amount.
This puts aggregate compute for the analytic test case, torus sweep, and geospatial experiments on the order of \SI{250}{\hour} of A100 time.
The research project as a whole has required extra compute beyond the figures above---chiefly preliminary solver and tolerance sweeps for RG-VFM
(a subset of which is reported in \cref{sec:rgvfm-nll}) and
early prototypes that did not make the final paper---for which we did not retain detailed accounting.

\clearpage


\end{document}